\documentclass[final, 12pt, a4paper]{article}

\usepackage{authblk}

\usepackage{packages}
\usepackage{custom_commands}
\usepackage{colors}

\addbibresource{publications.bib}
\addbibresource{references.bib}

\title{The Berkelmans-Pries Feature Importance Method: A Generic Measure of Informativeness of Features}

\author[1,*]{Joris Pries}
\author[1]{Guus Berkelmans}
\author[2]{Sandjai Bhulai}
\author[1]{Rob van der Mei}
\affil[1]{Centrum Wiskunde \& Informatica, Department of Stochastics, Science Park 123, Amsterdam 1098 XG, Netherlands} 
\affil[2]{Vrije Universiteit, Department of Mathematics, De Boelelaan 1111, Amsterdam 1081 HV, Netherlands}
\affil[*]{Corresponding author: Joris Pries, joris.pries@cwi.nl}

\begin{document}
\setlength{\emergencystretch}{0.67em} 

\maketitle

\DTLloaddb[noheader, keys={thekey,thevalue}]{mydata}{"mydata".dat}

\begin{abstract} \label{feature importance: sec: abstract}


    Over the past few years, the use of machine learning models has emerged as a generic and powerful means for prediction purposes. At the same time, there is a growing demand for \textit{interpretability} of prediction models. To determine which features of a dataset are important to predict a target variable $Y$, a \emph{Feature Importance} (FI) method can be used. By quantifying how important each feature is for predicting $Y$, irrelevant features can be identified and removed, which could increase the speed and accuracy of a model, and moreover, important features can be discovered, which could lead to valuable insights. A major problem with evaluating FI methods, is that the ground truth FI is often unknown. As a consequence, existing FI methods do not give the exact correct FI values. This is one of the many reasons why it can be hard to properly interpret the results of an FI method. Motivated by this, we introduce a new global approach named the \emph{Berkelmans-Pries} FI method, which is based on a combination of \emph{Shapley values} and the \emph{Berkelmans-Pries} dependency function. We prove that our method has many useful properties, and accurately predicts the correct FI values for several cases where the ground truth FI can be derived in an exact manner. We experimentally show for a large collection of FI methods (468) that existing methods do not have the same useful properties. This shows that the Berkelmans-Pries FI method is a highly valuable tool for analyzing datasets with complex interdependencies.

\end{abstract}



\section{Introduction} \label{feature importance: sec: introduction}


How important are you? This is a question that researchers (especially data scientists) have wondered for many years. Researchers need to understand how important a random variable (RV) $X$ is for determining $Y$. Which features are important for predicting the weather? Can indicators be found as symptoms for a specific disease? Can redundant variables be discarded to increase performance? These kinds of questions are relevant in almost any research area. Especially nowadays, as the rise of machine learning models generates the need to demystify prediction models. Altmann et al.~\cite{Altmann2010} state that \enquote{In life sciences, interpretability of machine learning models is as important as their prediction accuracy.} Although this might not hold for all research areas, interpretability is very useful. Knowing how predictions are made and why, is crucial for adapting these methods in everyday life.

Determining \emph{Feature Importance} (FI) is the art of discovering the importance of each feature $X_i$ when predicting $Y$. The following two cases are particularly useful. \begin{enumerate*}[label =(\Roman*)]
    \item Finding variables that are not important: \emph{redundant} variables can be discovered using FI methods. Irrelevant features could degrade the performance of a prediction model due to high dimensionality and irrelevant information \cite{Kira1992}. Eliminating redundant features could therefore increase both the speed and the accuracy of a prediction model.
    \item Finding variables that are important: \emph{important} features could reveal underlying structures that give valuable insights. Observing that variable $X$ is important for predicting $Y$ could steer research efforts into the right direction. Although it is critical to keep in mind that high FI does not mean causation. However, FI values do, for example, \enquote{enable an anaesthesiologist to better formulate a diagnosis by knowing which attributes of the patient and procedure contributed to the current risk predicted} \cite{Lundberg2018surgery}. In this way, an FI method can have really meaningful impact.
\end{enumerate*}

Over the years, many FI methods have been suggested, which results in a wide range of FI values for the same dataset. For example, stochastic methods do not even repeatedly predict the same FI values. This makes interpretation difficult. Examine e.g., a result of Fryer et al.~\cite{Fryer2021}, where one measure assigns an FI of $3.19$ to a variable, whereas another method gives the same variable an FI value of $0.265$. This raises a lot of questions: `Which FI method is correct?', 'Is this variable deemed important?', and more generally `What information does this give us?'. To assess the performance of an FI method, the ground truth should be known, which is often not the case \cite{Aas2019,Hooker2018,Tonekaboni2020,Zhou2020}. Therefore, when FI methods were developed, the focus has not yet lied on predicting the \emph{exact} correct FI values. Additionally, many FI methods do not have desirable properties. For example, two features that contain the same amount of information should get the same FI. We later show that this is often not the case.

To improve interpretability, we introduce a new FI method called \emph{Berkelmans-Pries} FI method, which is based on \emph{Shapley values} \cite{Roth} and the \emph{Berkelmans-Pries} dependency function \cite{BP}. Multiple existing methods already use Shapley values, which has been shown to give many nice properties. However, by \emph{additionally} using the \emph{Berkelmans-Pries} dependency function, even more useful properties are obtained. Notably, we prove that this approach accurately predicts the FI in some cases where the ground truth FI can be derived in an exact manner. By combining \emph{Shapley values} and the \emph{Berkelmans-Pries} dependency function a powerful FI method is created. This research is an important step forward for the field of FI, because of the following reasons:

\begin{itemize}
    \item We introduce a new FI method;
    \item We prove multiple useful properties of this method;
    \item We provide some cases where the ground truth FI can be derived in an exact manner;
    \item We prove for these cases that our FI method accurately predicts the correct FI;
    \item We obtain the largest collection of existing FI methods;
    \item We test if these methods adhere to the same properties, which shows that no method comes close to fulfilling all the useful properties;
    \item We provide Python code to determine the FI values \cite{githubfeatureimportance}.
\end{itemize}

\section{Berkelmans-Pries FI} \label{feature importance: sec: Berkelmans-Pries FI}

Kruskal~\cite{Kruskal} stated that
\enquote{There are infinitely many possible measures of association, and it sometimes seems that almost as many have been proposed at one time or another.} Although this quote was about dependency functions, it could just as well have been about FI methods. Over the years, many FI methods have been suggested, but it remains unclear which method should be used when and why \cite{Hooker2018}. In this section, we propose yet another new FI method named the \emph{Berkelmans-Pries FI method} (\bpfi{}). Although it is certainly subjective what it is that someone wants from an FI method, we show in \Cref{feature importance: sec: Properties of BP-FI} that \bpfi{} has many useful and intuitive properties. The \bpfi{} method is based on two key elements: \begin{enumerate*}[label =(\arabic*)]
    \item \emph{Shapley values} and
    \item the \emph{Berkelmans-Pries dependency function}.
\end{enumerate*}
We will discuss these components first to clarify how the \bpfi{} method works.

\subsection{Shapley value approach} \label{feature importance: subsec: Shapley value approach}


The \emph{Shapley value} is a unique game-theoretical way to assign \emph{value} to each \emph{player} participating in a multiplayer \emph{game} based on four axioms \cite{Roth}. This concept is widely used in FI methods, as it can be naturally adapted to determine how \emph{important} (value) each \emph{feature} (player) is for \emph{predicting a target variable} (game). Let $\nvariables$ be the number of features, then the Shapley value of feature $i$ is defined by \begin{align}
    \phi_i(v) & = \sum_{S\subseteq \{1,\dots, \nvariables\} \setminus \{i\}} \frac{|S|! \cdot (\nvariables - |S| - 1)!}{\nvariables!} \cdot \left ( v(S\cup \{i\}) - v(S) \right), \label{feature importance: eq: shapley value function}
\end{align}
where $v(S)$ can be interpreted as the `worth' of the coalition $S$ \cite{Roth}. The principle behind this formulation can also be explained in words: For every possible sequence of features up to feature $i$, the added value of feature $i$ is the difference between the worth before it was included (i.e., $v(S)$) and after (i.e., $v(S\cup\{i\})$). Averaging these added values over all possible sequences of features gives the final Shapley value for feature $i$.

\paragraph*{SHAP}
There are multiple existing FI methods that use Shapley values \cite{Shap,Fryer2021,Datta2016}, which immediately ensures some useful properties. The most famous of these methods is SHAP \cite{Shap}. This method is widely used for \emph{local} explanations (see \Cref{feature importance: subsec: Alternative FI methods}). To measure the local FI for a specific sample $x$ and a prediction model $f$, the \emph{conditional expectation} is used as characteristic function (i.e., $v$ in \Cref{feature importance: eq: shapley value function}). Let $x = (x_1,x_2, \dots, x_{\nvariables})$, where $x_i$ is the feature value of feature $i$, then SHAP FI values can be determined using:
\begin{align}
    v_x(S) := \EE_{z} \left [f(z) \vert z_i = x_i \text{ for all } i \in S, \text{ where } z = (z_1, \dots, z_{\nvariables})
        \right ]. \label{feature importance: eq: SHAP method}
\end{align}
\aanval{waar wordt de verwachting over genomen?}
\verdediging{over alle mogelijke samples(?)}
Observe that the characteristic function $v_x$ is defined locally for each $x$.
To get \emph{global} FI values, an average can be taken over all local FI values. Our novel FI method uses a different characteristic function, namely the \emph{Berkelmans-Pries} dependency function. This leads to many additional useful properties. Furthermore, the focus of this research is not on \emph{local} explanations, but \emph{global} FI values.

\subsection{Berkelmans-Pries dependency function} \label{feature importance: subsec: Berkelmans-Pries dependency function}


A new dependency measure, called the \emph{Berkelmans-Pries} (BP) dependency function, was introduced in \cite{BP}, which is used in the formulation of the \bpfi{} method. It is shown that the \bpdep{} satisfies a list of desirable properties, whereas existing dependency measures did not. It has a measure-theoretical formulation, but this reduces to a simpler and more intuitive version when all variables are discrete \cite{BP}. We want to highlight this formulation to give some intuition behind the \bpdep. It is given by
\begin{align}
    \def\arraystretch{1.5}
    \Dep{X}{Y}:= \left \{ \begin{array}{cl}
                              \frac{\UD{X}{Y}}{\UD{Y}{Y}} & \text{if $Y$ is not a.s. constant,} \\[1ex]
                              \text{undefined}            & \text{if $Y$ is a.s. constant,}
                          \end{array}\right .
    \label{feature importance: eq: definition bp dependency}
\end{align}
where (in the discrete case) it holds that
\begin{align}
    \UD{X}{Y}:=\sum_{x} p_X(x) \cdot \sum_{y} \left \vert p_{Y\vert X=x}(y)   - p_Y(y) \right\vert . \label{feature importance: eq: definition discrete ud}
\end{align}
The \bpd{} measure can be interpreted in the following manner. The numerator is the expected absolute difference between the distribution of $Y$ and the distribution of $Y$ given $X$. If $Y$ is highly dependent on $X$, the distribution changes as knowing $X$ gives information about $Y$, whereas if $Y$ is independent of $X$, there is no difference between these two distributions. The denominator is the maximal possible change in distribution of $Y$ for any variable, which is used to standardize the dependency function. Note that the \bpdep{} is \emph{asymmetric}: $\Dep{X}{Y}$ is the dependency of $Y$ on $X$, not vice versa. Due to the many desirable properties, the \bpdep{} is used for the \bpfi{}.

\subsection{Berkelmans-Pries FI method} \label{feature importance: subsec: Berkelmans-Pries FI method}


One crucial component of translating the game-theoretical approach of Shapley values to the domain of FI is choosing the function $v$ in \Cref{feature importance: eq: shapley value function}. This function assigns for each set of features $S$ a value $v(S)$ that characterizes the `worth' of the set $S$. How this function is defined, has a critical impact on the resulting FI. We choose to define the `worth' of a set $S$ to be the \bpd{} of $Y$ on the set $S$, which is denoted by $\Dep{S}{Y}$ \cite{BP}. Here, $\Dep{S}{Y}=\Dep{Z_S(\DD)}{Y}$ where $\DD$ denotes the entire dataset with all features and $Z_S(\DD)$ is the reduction of the dataset to include only the subset of features $S$. Let $\FISET$ be the set of all feature variables. Now, for every $S\subseteq \FISET$, we define:
\begin{align}
    v(S):= \Dep{S}{Y}. \label{feature importance: eq: value function bp-fi}
\end{align}
In other words, the value of set $S$ is exactly how \emph{dependent} the target variable $Y$ is on the features in $S$. The difference $v(S\cup\{i\}) -v(S)$ in \Cref{feature importance: eq: shapley value function} can now be viewed as the increase in dependency of $Y$ on the set of features, when feature $i$ is also known. The resulting Shapley values using the \bpdep{} as characteristic function are defined to be the \bpfi{} outcome. For each feature $i$, we get:
\begin{align}
    \FI{i} & := \sum_{S\subseteq \FISET \setminus \{i\}} \frac{|S|! \cdot (\nvariables - |S| - 1)!}{\nvariables!} \cdot \left ( v(S\cup \{i\}) - v(S) \right) \nonumber                                                                       \\[2ex]
           & \hphantom{:}= \sum_{S\subseteq \FISET \setminus \{i\}} \frac{|S|! \cdot (\nvariables - |S| - 1)!}{\nvariables!} \cdot \left ( \Dep{S\cup \{i\}}{Y} - \Dep{S}{Y} \right). \label{feature importance: eq: formal definition bp-fi}
\end{align}

Abbreviated notation improves readability of upcoming derivations, which is why we define
\begin{align*}
    w(S, \nvariables) & :=\frac{|S|! \cdot (\nvariables - |S| - 1)!}{\nvariables!}, \label[notations]{feature importance: eq: definition w} \tag{N1} \\[2ex]
    D(X,Y,S)          & := \Dep{S\cup \{X\}}{Y} - \Dep{S}{Y}. \label[notations]{feature importance: eq: definition D} \tag{N2}
\end{align*}

Note that when $Y$ is \emph{almost surely constant} (i.e., $\PP(Y=y)=1$), $\Dep{S}{Y}$ is undefined for any feature set $S$ (see \Cref{feature importance: eq: definition bp dependency}). We argue that it is natural to assume that $\FI{i}$ is also undefined, as every feature attributes everything and nothing at the same time. In the remainder of this paper, we assume that $Y$ is not a.s. constant.

\section{Properties of \bpfi{}} \label{feature importance: sec: Properties of BP-FI}


Recall that it is hard to evaluate FI methods, as the ground truth FI is often unknown \cite{Aas2019,Hooker2018,Tonekaboni2020,Zhou2020}. With this in mind, we want to show that the \bpfi{} method has many desirable properties. We also give some synthetic cases where the \bpfi{} method gives a natural expected outcome. The \bpfi{} method is stooled on \emph{Shapley values}, which are a unique solution based on four axioms \cite{Winter}. These axioms already give many characteristics that are preferable for an FI method. Additionally, using the \bpdep{} ensures that it has extra desirable properties. In this section, we prove properties of the \bpfi{} method and discuss why these are relevant and useful.

\begin{property}[Efficiency] \label{feature importance: prop: efficiency}
    The sum of all FI scores is equal to the total dependency of $Y$ on all features:
    \begin{align*}
        \sum_{i \in \FISET} \FI{i} = \Dep{\FISET}{Y}.
    \end{align*}
\end{property}

\begin{proof}
    Shapley values are \emph{efficient}, meaning that all the value is distributed among the players. Thus, \begin{align*}
        \sum_{i \in \FISET} \FI{i} & = v(\FISET) = \Dep{\FISET}{Y}. \qedhere
    \end{align*}
\end{proof}

\begin{relevance*}
    With our approach, we try to answer the question `How much did each feature contribute to the total dependency?'. The total `payoff' is in our case the total dependency. It is therefore natural to divide the entire payoff (but not more than that) amongst all features.
\end{relevance*}

\begin{corollary} \label{feature importance: corollary: sum stable}
    If adding a RV $X$ to the dataset does not give any additional information (i.e., $ \Dep{\FISET \cup X}{Y} =  \Dep{\FISET}{Y}$), then the sum of all FI remains the same.
\end{corollary}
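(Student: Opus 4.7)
The plan is to apply the efficiency property (the preceding Property) twice — once to the original feature collection $\FISET$, and once to the enlarged collection $\FISET \cup \{X\}$ — and then invoke the hypothesis to identify the two totals. Concretely, I would first note that efficiency gives
\begin{align*}
    \sum_{i \in \FISET} \FI{i} = \Dep{\FISET}{Y},
\end{align*}
and applying the same property to the game in which the coalition of players is $\FISET \cup \{X\}$ yields
\begin{align*}
    \sum_{i \in \FISET \cup \{X\}} \FI{i} = \Dep{\FISET \cup \{X\}}{Y}.
\end{align*}
The hypothesis $\Dep{\FISET \cup \{X\}}{Y} = \Dep{\FISET}{Y}$ equates the right-hand sides, and therefore the two sums of feature importances are equal.

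The statement is therefore essentially a restatement of efficiency under the stated invariance assumption, and needs no additional machinery beyond what was already proved. The only conceptual point worth flagging in the write-up is that, although the \emph{sum} of the FI scores is unchanged, the individual scores $\FI{i}$ for $i\in\FISET$ may themselves change when $X$ is appended to the dataset — since $X$ now appears as an additional element in many of the subsets $S$ that enter the Shapley formula \eqref{feature importance: eq: formal definition bp-fi}. The corollary only controls the aggregate mass. I expect no real obstacle here; the only thing to be slightly careful about is simply to interpret the two sides of the claimed equality as the totals coming from two different Shapley games (one with player set $\FISET$, one with player set $\FISET\cup\{X\}$) and then invoke the hypothesis to match the worths of the grand coalitions.
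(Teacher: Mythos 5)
Your proposal is correct and is exactly the argument the paper intends: the paper's proof simply states that the corollary ``directly follows from'' the efficiency property, and your write-up spells out that one-line deduction by applying efficiency to both player sets and matching the grand-coalition worths via the hypothesis. The remark that individual $\FI{i}$ values may still change is a sensible clarification but adds nothing beyond what the paper already discusses elsewhere.
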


\begin{proof}
    This directly follows from \Cref{feature importance: prop: efficiency}.
\end{proof}

\begin{relevance*}
    If the collective knowledge remains the same, the same amount of credit is available to be divided amongst the features. Only when new information is added, an increase in combined credit is warranted. A direct result of this corollary is that adding a \emph{clone} (i.e., $X^{\text{clone}}:= X$) of a variable $X$ to the dataset will never increase the total sum of FI.
\end{relevance*}

\begin{property}[Symmetry] \label{feature importance: prop: symmetry}
    If for every $S\subseteq \FISET \setminus \{i, j\}$ it holds that $\Dep{S\cup \{i\}}{Y} = \Dep{S\cup \{j\}}{Y}$, then $\FI{i} = \FI{j}.$
\end{property}

\begin{proof}
    Shapley values are \emph{symmetric}, meaning that if ${v(S\cup \{i\}) = v(S\cup \{j\})}$ for every $S\subseteq \FISET \setminus \{i,j\}$, it follows that $\FI{i} = \FI{j}$. Thus, it automatically follows that \bpfi{} is also symmetric.
\end{proof}

\begin{relevance*}
    If two variables are interchangeable, meaning that they \emph{always} contribute equally to the dependency, it is only sensible that they obtain the same FI. This is a desirable property for an FI method, as two features that contribute equally should obtain the same FI.
\end{relevance*}

\begin{property}[Range]\label{feature importance: prop: range}
    For any RV $X$, it holds that $\FI{X} \in [0,1]$.
\end{property}

\begin{proof}
    The \bpdep{} is \emph{non-increasing} under functions of $X$ \cite{BP}, which means that for any measurable function $f$ it holds that
    \begin{align*}
        \Dep{f(X)}{Y} \leq \Dep{X}{Y}.
    \end{align*}
    Take $f:= Z_S$, which is the function that reduces $\DD$ to the subset of features in $S$. Using the non-increasing property of \bpdep, it follows that:
    \begin{equation}\label{feature importance: eq: dependency increasing}
        \begin{split}
            \Dep{S}{Y} & =\Dep{Z_S(\DD)}{Y}=\Dep{Z_S(Z_{S\cup\{i\}}(\DD))}{Y}  \\[2ex]          &\leq \Dep{Z_{S\cup\{i\}}(\DD)}{Y}=\Dep{S\cup\{i\}}{Y}.
        \end{split}
    \end{equation}
    Examining \Cref{feature importance: eq: formal definition bp-fi}, we observe that every FI value must be greater or equal to zero, as $\Dep{S \cup \{i\}}{Y} -  \Dep{S}{Y} \geq 0.$

    One of the properties of the \bpdep{} is that for any $X,Y$ it holds that $\Dep{X}{Y} \in [0,1]$ \cite{BP}. Using \Cref{feature importance: prop: efficiency}, the sum of all FI values must therefore be in $[0,1]$, as $\sum_{i \in \FISET} \FI{i} = \Dep{\FISET}{Y} \in [0,1] $. This gives an upper bound for the FI values, which is why we can now conclude that $\FI{X} \in [0,1]$ for any RV $X$.
\end{proof}

\begin{relevance*}
    It is essential for interpretability that an FI method is bounded by known bounds. For example, an FI score of 4.2 cannot be interpreted properly, when the upper or lower bound is unknown.
\end{relevance*}

\begin{property}[Bounds] \label{feature importance: prop: bounds}
    Every $\FI{X}$ with $X\in \FISET$ is bounded by \begin{align*}
        \frac{\Dep{X}{Y}}{\nvariables} \leq \FI{X} \leq \Dep{\FISET}{Y}.
    \end{align*}
\end{property}

\begin{proof}
    The upper bound follows from \Cref{feature importance: prop: efficiency,feature importance: prop: range}, as \begin{align*}
        \Dep{\FISET}{Y} = \sum_{i \in \FISET} \FI{i} \geq \FI{X},
    \end{align*}
    where the last inequality follows since $\FI{i} \in [0,1]$ for all $i\in \FISET$.

    The lower bound can be established using the inequality from \Cref{feature importance: eq: dependency increasing} within \Cref{feature importance: eq: formal definition bp-fi}. This gives (using \Cref{feature importance: eq: definition w})
    \begin{align*}
        \FI{X} & = \sum_{S\subseteq \FISET \setminus \{X\}}  w(S, \nvariables) \cdot \Bigl ( \Dep{S\cup \{X\}}{Y}  - \Dep{S}{Y}\Bigr ) \\[2ex]
               & \geq  w(0, \nvariables)  \cdot \left ( \Dep{\emptyset \cup \{X\}}{Y} - \Dep{\emptyset}{Y} \right)                     \\[2ex]
               & = \frac{0! \cdot (\nvariables - 0 - 1)!}{\nvariables!} \cdot \Dep{X}{Y}                                               \\[2ex]
               & = \frac{\Dep{X}{Y}}{\nvariables}. \qedhere
    \end{align*}
\end{proof}

\begin{relevance*}
    These bounds are useful for upcoming proofs.
\end{relevance*}

\begin{property}[Zero FI] \label{feature importance: prop: zero FI}
    For any RV $X$, it holds that \begin{align*}
        \FI{X} = 0 \Leftrightarrow \Dep{S\cup \{X\}}{Y} = \Dep{S}{Y} \text{ for all } S \in \FISET\setminus \{X\}.
    \end{align*}
\end{property}

\begin{proof}
    $\Leftarrow$: When $\Dep{S\cup \{X\}}{Y} = \Dep{S}{Y}$ for all $S \in \FISET \setminus \{X\}$, it immediately follows from \Cref{feature importance: eq: formal definition bp-fi} (with \Cref{feature importance: eq: definition w}) that
    \begin{align*}
        \FI{X} & =
        \sum_{S\subseteq \FISET \setminus \{X\}}  w(S, \nvariables)  \cdot \bigl ( \Dep{S\cup \{X\}}{Y} - \Dep{S}{Y} \bigr )
        \\[2ex]
               & = \sum_{S\subseteq \FISET \setminus \{X\}} \frac{|S|! \cdot (\nvariables - |S| - 1)!}{\nvariables!} \cdot 0 \\[2ex]
               & = 0.
    \end{align*}
    $\Rightarrow$: Assume that $\FI{X} = 0$. It follows from the proof of \Cref{feature importance: prop: range} that $\Dep{S \cup \{X\}}{Y} -  \Dep{S}{Y} \geq 0$ for every $S\subseteq \FISET \setminus \{X\}$. If $\Dep{S^* \cup \{X\}}{Y} -  \Dep{S^*}{Y} > 0$ for some given $S^* \in \FISET \setminus \{X\}$, it follows from \Cref{feature importance: eq: formal definition bp-fi} (with \Cref{feature importance: eq: definition w}) that
    \begin{align*}
        \FI{X} & =
        \sum_{S\subseteq \FISET \setminus \{X\}} w(S, \nvariables) \cdot \bigl ( \Dep{S\cup \{X\}}{Y} - \Dep{S}{Y} \bigr )
        \\[2ex]
               & \geq  w(S^*, \nvariables) \cdot \left ( \Dep{S^*\cup \{X\}}{Y} - \Dep{S^*}{Y} \right)                                       \\[2ex]
               & =  \frac{|S^*|! \cdot (\nvariables - |S^*| - 1)!}{\nvariables!} \cdot \left ( \Dep{S^*\cup \{X\}}{Y} - \Dep{S^*}{Y} \right) \\[2ex]
               & > 0.
    \end{align*}
    This gives a contradiction with the assumption that $\FI{X} = 0$, thus it is not possible that such an $S^*$ exists. This means that $\Dep{S\cup \{X\}}{Y} = \Dep{S}{Y}$ for all $S \in \FISET\setminus \{X\}$.
\end{proof}

\begin{relevance*}
    When a feature \emph{never} contributes any information, it is only fair that it does not receive any FI. The feature can be removed from the dataset, as it has no effect on the target variable. On the other hand, when a feature has an FI of zero, it would be unfair to this feature if it does in fact contribute information somewhere. It should then be rewarded some FI, albeit small it should be larger than zero.
\end{relevance*}

\paragraph*{Null-independence}
The property that a feature receives zero FI, when $\Dep{S\cup \{X\}}{Y} = \Dep{S}{Y}$ for all $S\in \FISET \setminus \{X\}$, is the same notion as a \emph{null player} in game theory. Berkelmans et al.~\cite{BP} show that $\Dep{X}{Y} = 0$, when $Y$ is \emph{independent} of $X$. To be a \emph{null player} requires a stricter definition of independence, which we call \emph{null-independence}. $Y$ is null-independent on $X$ if $\Dep{S\cup \{X\}}{Y} = \Dep{S}{Y}$ for all $S\in \FISET \setminus \{X\}$. In other words, $X$ is null-independent if and only if $\FI{X} = 0.$

\begin{corollary}
    Independent feature $\not \Rightarrow$ null-independent feature.
\end{corollary}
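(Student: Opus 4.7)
The plan is to produce a counterexample rather than a structural argument, since the corollary is a non-implication. The canonical construction is XOR: take $\FISET = \{X_1, X_2\}$ with $X_1, X_2$ independent fair Bernoulli variables, and set $Y := X_1 \oplus X_2$. This is the smallest and most transparent setting where the pairwise independence of $Y$ from each coordinate coexists with perfect determination of $Y$ from the pair.

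First, I would verify the hypothesis, namely that $Y$ is independent of $X_1$ (so $\Dep{X_1}{Y} = 0$). A direct expansion of \Cref{feature importance: eq: definition discrete ud} shows $p_{Y \mid X_1 = x}(y) = p_Y(y) = \tfrac{1}{2}$ for every $x, y \in \{0,1\}$, so the inner sum vanishes and $\UD{X_1}{Y} = 0$, hence $\Dep{X_1}{Y} = 0$.

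Next, I would refute null-independence by exhibiting a single set $S$ on which the marginal contribution of $X_1$ is positive. The natural choice is $S = \{X_2\}$. By the symmetry of the XOR construction, the same calculation gives $\Dep{\{X_2\}}{Y} = 0$. On the other hand, $Y$ is a deterministic function of $(X_1, X_2)$, so the conditional distribution $p_{Y \mid X_1 = x_1, X_2 = x_2}$ is a point mass, which makes $\Dep{\{X_1, X_2\}}{Y} = 1$. Therefore
\begin{align*}
\Dep{S \cup \{X_1\}}{Y} - \Dep{S}{Y} = 1 - 0 = 1 \neq 0,
\end{align*}
and by \Cref{feature importance: prop: zero FI} the feature $X_1$ is not null-independent even though $\Dep{X_1}{Y} = 0$.

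There is no real obstacle here; the only thing to be careful about is to keep the two notions from the preceding paragraph straight: an \emph{independent} feature only requires the pairwise condition $\Dep{X}{Y} = 0$, while \emph{null-independence} is the strictly stronger, coalition-wise marginal-contribution condition. The XOR example is precisely designed to separate these two, since the target is recoverable only from the joint observation and not from either coordinate in isolation.
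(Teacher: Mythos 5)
Your proposal is correct and uses the same XOR counterexample as the paper; the only (minor) difference is that you verify the failure of null-independence directly from its definition by exhibiting the set $S=\{X_2\}$ with $\Dep{S\cup\{X_1\}}{Y}-\Dep{S}{Y}=1$, whereas the paper routes through the computation $\FI{X_1}=\tfrac12\neq 0$ from \Cref{feature importance: prop: xor dataset}. Both are valid; your direct check is if anything slightly more self-contained.
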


\begin{proof}
    Take e.g., the dataset consisting of two binary features $X_1, X_2\sim \mathcal{U}(\{0,1\})$ and a target variable $Y =  X_1 \cdot (1-X_2) + X_2 \cdot (1-X_1)$ which is the XOR of $X_1$ and $X_2$. Individually, the variables do not give any information about $Y$, whereas collectively they fully determine $Y$. In the proof of \Cref{feature importance: prop: xor dataset}, we show that this leads to $\FI{X_1} = \FI{X_2} = \frac{1}{2}$, whilst $\Dep{X_1}{Y} = \Dep{X_2}{Y} = 0$. Thus, $X_1$ and $X_2$ are \emph{independent}, but not \emph{null-independent}.
\end{proof}

\begin{corollary}
    Independent feature $\Leftarrow$ null-independent feature.
\end{corollary}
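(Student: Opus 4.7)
The plan is to specialize the null-independence hypothesis to the empty coalition, which collapses the statement to a simple equality involving $\Dep{X}{Y}$ alone, and then invoke the characterization of independence via the BP dependency.

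Concretely, I would start by noting that $\emptyset \subseteq \FISET \setminus \{X\}$, so applying null-independence with $S = \emptyset$ gives
\begin{align*}
    \Dep{\{X\}}{Y} = \Dep{\emptyset}{Y}.
\end{align*}
Next I would argue that $\Dep{\emptyset}{Y} = 0$: an empty feature set conveys no information, so using the discrete formula \eqref{feature importance: eq: definition discrete ud}, conditioning on nothing means $p_{Y\vert \emptyset}(y) = p_Y(y)$ identically, making $\UD{\emptyset}{Y} = 0$ and hence $\Dep{\emptyset}{Y} = 0$ (the same holds in the measure-theoretic formulation). Combined with the previous equality, this yields $\Dep{X}{Y} = 0$.

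Finally, I would conclude by translating $\Dep{X}{Y} = 0$ into independence of $Y$ on $X$. The paragraph preceding the corollary explicitly records the "$Y$ independent of $X$ $\Rightarrow \Dep{X}{Y} = 0$" direction from \cite{BP}; the converse direction needed here is immediate from \eqref{feature importance: eq: definition discrete ud} since every summand $p_X(x)\cdot \lvert p_{Y\vert X=x}(y) - p_Y(y)\rvert$ is nonnegative, so $\Dep{X}{Y} = 0$ forces $p_{Y\vert X=x}(y) = p_Y(y)$ for all $y$ and all $x$ with $p_X(x) > 0$, which is exactly independence.

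The main (and only) obstacle is being precise about $\Dep{\emptyset}{Y} = 0$ and about the fact that vanishing of the BP dependency characterizes independence in both directions; both points are essentially definitional given the formula in \eqref{feature importance: eq: definition discrete ud}, and the proof does not require any heavier machinery.
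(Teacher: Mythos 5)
Your proof is correct, but it takes a different route from the paper's. The paper first converts null-independence into $\FI{X}=0$ (via \Cref{feature importance: prop: zero FI}) and then invokes the lower bound $\FI{X}\geq \Dep{X}{Y}/\nvariables$ from \Cref{feature importance: prop: bounds} to force $\Dep{X}{Y}=0$; you instead specialize the defining condition of null-independence directly to the coalition $S=\emptyset$, obtaining $\Dep{X}{Y}=\Dep{\emptyset}{Y}=0$ without any reference to Shapley values. Your argument is the more elementary of the two — it needs only the fact that $\Dep{\emptyset}{Y}=0$, which the paper itself already uses implicitly inside the proof of \Cref{feature importance: prop: bounds}, so you are not assuming anything the paper's own chain of reasoning avoids. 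The paper's version buys brevity by reusing already-established properties, whereas yours exposes that the implication holds at the level of the dependency function alone. You also go one step further than the paper by explicitly justifying that $\Dep{X}{Y}=0$ characterizes genuine probabilistic independence via the nonnegativity of the summands in \Cref{feature importance: eq: definition discrete ud}; the paper simply identifies "independent" with $\Dep{X}{Y}=0$ and stops there, so this extra step is welcome but not strictly required for the corollary as stated.
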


\begin{proof}
    When $X$ is \emph{null-independent}, it holds that $\FI{X} = 0$. Using \Cref{feature importance: prop: bounds}, we obtain \begin{align*}
        0 = \FI{X} \geq \frac{\Dep{X}{Y}}{\nvariables}  \Leftrightarrow \Dep{X}{Y} = 0.
    \end{align*}
    Thus, when $X$ is \emph{null-independent}, it is also \emph{independent}.
\end{proof}

\begin{corollary}
    Almost surely constant variables get zero FI.
\end{corollary}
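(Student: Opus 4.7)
My plan is to show that any almost surely constant variable $X$ is in fact \emph{null-independent}, and then invoke \Cref{feature importance: prop: zero FI} to conclude $\FI{X}=0$.

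The key observation is that if $X$ is almost surely equal to some constant $c$, then for any feature subset $S\subseteq \FISET\setminus\{X\}$, conditioning on $(Z_S, X)$ gives the same conditional distribution of $Y$ as conditioning on $Z_S$ alone, because the extra coordinate $X$ carries no stochastic information. Concretely, in the discrete setting of \Cref{feature importance: eq: definition discrete ud} one can check that $p_{Y\mid Z_S=s, X=c}(y)=p_{Y\mid Z_S=s}(y)$ for every $y$ and every $s$ in the support of $Z_S$, and $p_X$ places all mass at $c$, so
\[
\UD{Z_{S\cup\{X\}}}{Y} \;=\; \sum_{s} p_{Z_S}(s)\cdot \sum_{y} \bigl|\,p_{Y\mid Z_S=s}(y) - p_Y(y)\,\bigr| \;=\; \UD{Z_S}{Y}.
\]
Dividing by the common normalizing factor $\UD{Y}{Y}$ yields $\Dep{S\cup\{X\}}{Y}=\Dep{S}{Y}$. (The same argument carries over in the measure-theoretic formulation of the \bpdep{}, since augmenting a random vector by an a.s.\ constant coordinate leaves the generated $\sigma$-algebra unchanged up to null sets.)

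With this identity established for every $S\subseteq \FISET\setminus\{X\}$, the forward direction already proven in \Cref{feature importance: prop: zero FI} gives $\FI{X}=0$ directly. The only mildly delicate step is justifying that appending an a.s.\ constant to a feature vector does not change the \bpdep{}; this is essentially a measure-theoretic triviality but worth stating explicitly since the \bpdep{} has not been given a general invariance lemma in the excerpt. Everything else is immediate from \Cref{feature importance: prop: zero FI}.
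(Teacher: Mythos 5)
Your proof is correct and follows essentially the same route as the paper: the paper likewise observes that an a.s.\ constant $X$ satisfies $\Dep{S\cup\{X\}}{Y}=\Dep{S}{Y}$ for every $S\subseteq\FISET\setminus\{X\}$ (i.e., $X$ is null-independent) and then concludes via \Cref{feature importance: prop: zero FI}. Your version simply spells out the discrete computation that the paper leaves as an immediate remark.
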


\begin{proof}
    If $X$ is \emph{almost surely constant} (i.e., $\PP(X=x) = 1$), it immediately follows that ${\Dep{S\cup \{X\}}{Y} = \Dep{S}{Y}}$ for any $S\subseteq \FISET \setminus \{X\}$, as the distribution of $Y$ is not affected by $X$.
\end{proof}

\begin{property}[FI equal to one] \label{feature importance: prop: FI equal to one}
    When $\FI{X} = 1$, it holds that $\Dep{X}{Y} = 1$ and all other features are null-independent.
\end{property}

\begin{proof}
    As the \bpdep{} is bounded by $[0,1]$ \cite{BP}, it follows from \Cref{feature importance: prop: efficiency} that $\sum_{i \in \FISET} \FI{i} \leq 1$. Noting that each FI must be in $[0,1]$ due to \Cref{feature importance: prop: range}, we find that \begin{align*}
        \FI{X} = 1 \Rightarrow \FI{X'} = 0 \text{ for all } X' \in \FISET \setminus \{X\}.
    \end{align*}
    Thus all other features are \emph{null-independent}. Next, we show that $\Dep{X}{Y}=1$ must also hold, when $\FI{X} = 1$. Assume that $\Dep{X}{Y} < 1$. Using \Cref{feature importance: eq: formal definition bp-fi} (with \Cref{feature importance: eq: definition w,feature importance: eq: definition D}) we find that
    \begin{align*}
        1 & = \FI{X} = \sum_{S\subseteq \FISET \setminus \{X\}} w(S, \nvariables) \cdot D(X,Y,S)                                                                                                                 \\[2ex]
          & = \sum_{S\subseteq \FISET \setminus \{X\} : |S| > 0} \left ( w(S, \nvariables) \cdot D(X,Y,S) \right ) + w(\emptyset, \nvariables) \cdot D(X,Y,\emptyset)                                            \\[2ex]
          & \leq \sum_{S\subseteq \FISET \setminus \{X\} : |S| > 0} \left ( w(S, \nvariables) \cdot \left ( 1 - 0 \right)      \right )         + w(\emptyset, \nvariables) \cdot \left ( \Dep{X}{Y} - 0 \right) \\[2ex]
          & < \sum_{S\subseteq \FISET \setminus \{X\}} w(S, \nvariables)                                                                                                                                         \\[2ex]
          & = \sum_{k = 0}^{\nvariables - 1} \binom{\nvariables - 1}{k} \cdot \frac{k! \cdot (\nvariables - k - 1)!}{\nvariables!}                                                                               \\[2ex]
          & = \sum_{k = 0}^{\nvariables - 1} \frac{(\nvariables - 1)!}{k! \cdot (\nvariables - 1 -k)!} \cdot \frac{k! \cdot (\nvariables - k - 1)!}{\nvariables!}                                                \\[2ex]
          & = \sum_{k = 0}^{\nvariables - 1} \frac{1}{\nvariables}                                                                                                                                               \\[2ex]
          & = 1.
    \end{align*}
    Note that the inequality step follows from the range of the \bpdep{} (i.e., $[0,1]$). The largest possible addition is when $\Dep{S \cup \{X\}}{Y} - \Dep{S}{Y} = 1 - 0  =1$. This result gives a contradiction, as $1 < 1$ cannot be true, which means that $\Dep{X}{Y} = 1$.
\end{proof}

\begin{relevance*}
    When a variable gets an FI of one, the rest of the variables should be zero. Additionally, it should mean that this variable contains the necessary information to fully determine $Y$, which is why $\Dep{X}{Y} = 1$ should hold.
\end{relevance*}

\begin{property}
    $\Dep{X}{Y} = 1 \not \Rightarrow \FI{X} = 1.$
\end{property}

\begin{proof}
    As counterexample, examine the case where there are multiple variables that fully determine $Y$. \Cref{feature importance: prop: efficiency,feature importance: prop: range} must still hold. Thus, if FI is one for every variable that fully determines $Y$, we get \begin{align*}
        \sum_{i \in \FISET} \FI{i} \geq 1 + 1 \neq 1 = \Dep{\FISET}{Y},
    \end{align*}
    which is a contradiction.
\end{proof}

\begin{relevance*}
    This property is important for interpretation of the FI score. When $\FI{X} \neq 1$, it cannot be automatically concluded that $Y$ is not fully determined by $X$.
\end{relevance*}

If $Y$ is fully determined by $X$, we call $X$ \emph{fully informative}, as it gives all information that is necessary to determine $Y$.

\begin{property}[Max FI when fully informative] \label{feature importance: prop: max Fi when fully determined}
    If $X$ is fully informative, it holds that $\FI{i}\leq \FI{X}$ for any $i\in\FISET$.
\end{property}
\begin{proof}
    Assume that there exists a feature $i$ such that $\FI{i} > \FI{X}$, when $Y$ is fully determined by $X$. To attain a higher FI, somewhere in the sum of \Cref{feature importance: eq: formal definition bp-fi}, a higher gain must be made by $i$ compared to $X$. Observe that for any $S\subseteq \FISET \setminus\{i,X\}$ it holds that \begin{align*}
        \Dep{S\cup \{i\}}{Y} - \Dep{S}{Y} & \leq 1 - \Dep{S}{Y}                  \\[2ex]
                                          & = \Dep{S\cup\{X\}}{Y}  - \Dep{S}{Y}.
    \end{align*}
    For any $S\subseteq \FISET \setminus \{i\}$ with $X \in S$, it holds that
    \begin{align*}
        \Dep{S\cup \{i\}}{Y} - \Dep{S}{Y} & = \Dep{S\cup \{i\}}{Y} - 1 \\[2ex]
                                          & = 0.
    \end{align*}
    The last step follows from \Cref{feature importance: eq: dependency increasing}, as the dependency function is increasing, thus $\Dep{S\cup \{i\}}{Y} = 1$. In other words, no possible gain can be achieved with respect to $X$ in the Shapley values. Therefore, it cannot hold that $\FI{i} > \FI{X}$.
\end{proof}

\begin{relevance*}
    Whenever a variable fully determines $Y$, it should attain the highest FI. What would an FI higher than such a score mean? It gives more information than the maximal information? When this property would not hold, it would result in a confusing and difficult interpretation process.
\end{relevance*}

\begin{property}[Limiting the outcome space] \label{feature importance: prop: limiting outcome space}
    For any measurable function $f$ and RV $X$, replacing $X$ with $f(X)$ never increases the assigned FI to this variable.
\end{property}
\begin{proof}
    The \bpdep{} is non-increasing under functions of $X$ \cite{BP}. This means that for any measurable function $g$, it holds that \begin{align*}
        \Dep{g(X)}{Y} \leq \Dep{X}{Y}.
    \end{align*}
    Choose $g$ to be the function that maps the union of any feature set $S$ and the original RV $X$ to the union of $S$ and the replacement $f(X)$. In other words $g(S\cup \{X\}) = S\cup \{f(X)\}$ for any feature set $S$. It then follows that:
    \begin{align*}
        \Dep{S\cup \{f(X)\}}{Y} = \Dep{g(S \cup \{X\})}{Y} \leq \Dep{S \cup \{X\}}{Y},
    \end{align*}
    and \begin{align*}
        \Dep{S\cup \{f(X)\}}{Y} - \Dep{S}{Y} \leq \Dep{S \cup \{X\}}{Y} - \Dep{S}{Y}
    \end{align*}
    for any $S \subseteq \FISET \setminus \{X\}$. Thus, using \Cref{feature importance: eq: formal definition bp-fi}, we can conclude that replacing $X$ with $f(X)$ never increases the assigned FI.
\end{proof}

\begin{relevance*}
    This is an important observation for preprocessing. Whenever a variable is binned, it would receive less (or equal) FI when less bins are used. It could also potentially provide a useful upper bound, when the FI is already known before replacing $X$ with $f(X)$.
\end{relevance*}

\begin{corollary}
    For any measurable function $f$ and RV $X$, when $X = f(X')$ for another RV $X'$, replacing feature $X$ by feature $X'$ will never decrease the assigned FI.
\end{corollary}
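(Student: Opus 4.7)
\medskip

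The plan is to obtain this corollary almost immediately from \Cref{feature importance: prop: limiting outcome space}, which is the preceding property. That property says: for any measurable $g$ and RV $X'$, replacing $X'$ by $g(X')$ never increases the assigned FI, i.e., $\FI{g(X')} \leq \FI{X'}$. The corollary concerns the reverse direction of this substitution, so I would simply apply the property with $g := f$ and note that $X = f(X')$ plays the role of $g(X')$.

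Concretely, I would first restate the hypothesis $X = f(X')$ and observe that $X$ is a measurable function of $X'$. Then, invoking \Cref{feature importance: prop: limiting outcome space} with the measurable function $f$ applied to the RV $X'$, we immediately get
\begin{align*}
    \FI{X} = \FI{f(X')} \leq \FI{X'}.
\end{align*}
Rearranging this inequality says precisely that replacing feature $X$ by feature $X'$ changes the assigned FI from $\FI{X}$ to $\FI{X'}$, and the latter is at least as large as the former, so the FI never decreases.

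Since this is a direct rephrasing of the previous property, there is no real obstacle; the whole argument fits in one or two lines and can simply cite \Cref{feature importance: prop: limiting outcome space}. The only mild subtlety is making sure the roles of $X$ and $X'$ are swapped cleanly in the statement versus the property being invoked, but this is purely notational and poses no mathematical difficulty.
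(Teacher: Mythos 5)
Your proposal is correct and matches the paper's approach: the paper's own proof re-derives the key inequality $\Dep{S\cup \{X\}}{Y} = \Dep{S\cup \{f(X')\}}{Y} \leq \Dep{S\cup \{X'\}}{Y}$ for each subset $S$ and plugs it into the Shapley sum, which is exactly the content of \Cref{feature importance: prop: limiting outcome space} that you invoke as a black box. Your observation that the corollary is just that property read with the roles of the two datasets swapped is the intended argument.
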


\begin{proof}
    When $X = f(X')$ holds, it follows again (similar to \Cref{feature importance: prop: limiting outcome space}) that \begin{align*}
        \Dep{S\cup \{X\}}{Y} & = \Dep{S\cup \{f(X')\}}{Y} \leq \Dep{S \cup \{X'\}}{Y}
    \end{align*}
    for any $S \subseteq \FISET \setminus \{X\}$. Therefore, using \Cref{feature importance: eq: formal definition bp-fi}, observe that replacing $X$ with $X'$ never decreases the assigned FI.
\end{proof}

Shapley values have additional properties when the characteristic function $v$ is \emph{subadditive} and/or \emph{superadditive} \cite{Roth}. We show that our function, defined by \Cref{feature importance: eq: value function bp-fi}, is neither.

\begin{property}[Neither subadditive nor superadditive] \label{feature importance: prop: not subadditive or superadditive}
    Our characteristic function $v(S)= \Dep{S}{Y}$ is neither \emph{subadditive} nor \emph{superadditive}.
\end{property}

\begin{proof}
    Consider the following two counterexamples.

    \emph{Counterexample subadditive:}
    A function $f$ is \emph{subadditive} if for any $S,T\in \FISET$ it holds that \begin{align*}
        f(S\cup T) \leq f(S) + f(T).
    \end{align*}
    Examine the dataset consisting of two binary features $X_1, X_2\sim \mathcal{U}(\{0,1\})$ and a target variable $Y =  X_1 \cdot (1-X_2) + X_2 \cdot (1-X_1)$ which is the XOR of $X_1$ and $X_2$. Both $X_1$ and $X_2$ do not individually give any new information about the distribution of $Y$, thus $v(X_1) = v(X_2) = 0$ (see properties of the \bpdep{} \cite{BP}). However, collectively they fully determine $Y$ and thus $v(X_1 \cup X_2) = 1$. We can therefore conclude that $v$ is not subadditive, as \begin{align*}
        v(X_1 \cup X_2) = 1 \not \leq 0 + 0 = v(X_1) + v(X_2).
    \end{align*}

    \emph{Counterexample superadditive:}
    A function $f$ is \emph{superadditive} if for any $S,T\in \FISET$ it holds that \begin{align*}
        f(S\cup T) \geq f(S) + f(T).
    \end{align*}
    Consider the dataset consisting of two binary features $X \sim \mathcal{U}(\{0,1\})$ and a \emph{clone} $X^{\text{clone}} := X$, where the target variable $Y$ is defined as $Y:= X.$ Note that both $X$ and $X^{\text{clone}}$ fully determine $Y$, thus $v(X) = v(X^{\text{clone}}) = 1$ (see properties of the \bpdep{} \cite{BP}). Combining $X$ and $X^{\text{clone}}$ also fully determines $Y$, which leads to: \begin{align*}
        v(X \cup X^{\text{clone}}) = 1 \not \geq 1 + 1 = v(X) + v(X^{\text{clone}}).
    \end{align*}
    Thus, $v$ is also not superadditive.
\end{proof}

\begin{relevance*}
    If the characteristic function $v$ is \emph{subadditive}, it would hold that $\FI{X} \leq v(X)$ for any $X\in\FISET$. When $v$ is \emph{superadditive}, it follows that $\FI{X} \geq v(X)$ for any $X\in \FISET$. This is sometimes also referred to as \emph{individual rationality}, which means that no player receives less, than what he could get on his own. This makes sense in a game-theoretic scenario with human players that can decide to not play when one could gain more by not cooperating. In our case, features do not have a free will, which makes this property not necessary. The above proof shows that $v$ is in our case neither \emph{subadditive} nor \emph{superadditive}, which is why we cannot use their corresponding bounds.
\end{relevance*}

\begin{property}[Adding features can increase FI] \label{feature importance: prop: adding features can increase FI}
    When an extra feature is added to the dataset, the FI of $X$ can increase.
\end{property}
\begin{proof}
    Consider the previously mentioned XOR dataset, where $X_1, X_2 \sim \mathcal{U}(\{0,1\})$ and $Y =  X_1 \cdot (1-X_2) + X_2 \cdot (1-X_1)$. If at first, $X_2$ was not in the dataset, the FI of $X_1$ would be zero, as $\Dep{X_1}{Y} = 0$. However, if $X_2$ is added to the dataset, the FI of $X_1$ increases to $\frac{1}{2}$ (see \Cref{feature importance: prop: xor dataset}). The FI of a feature can thus increase if another feature is added.
\end{proof}

\begin{property}[Adding features can decrease FI] \label{feature importance: prop: adding features can decrease FI}
    When an extra feature is added to the dataset, the FI of $X$ can decrease.
\end{property}

\begin{proof}
    Consider the dataset given by $X \sim \mathcal{U}(\{0,1\})$ and $Y:= X.$ It immediately follows that $\FI{X} = 1$. However, when a \emph{clone} is introduced ($X^{\text{clone}}:=X$), it holds that $\FI{X} = \FI{X^{\text{clone}}}$, because of \Cref{feature importance: prop: max Fi when fully determined}. Additionally, it follows from \Cref{feature importance: prop: efficiency} that $\FI{X} + \FI{X^{\text{clone}}} = 1$. Thus, $\FI{X} = \frac{1}{2}$, and the FI of a variable can therefore be decreased if another variable is added.
\end{proof}

\begin{relevance*}
    It is important to observe that the FI of a variable is dependent on the other features (\Cref{feature importance: prop: adding features can increase FI,feature importance: prop: adding features can decrease FI}). Adding or removing features could change the FI, which one needs to be aware of.
\end{relevance*}

\begin{property}[Cloning does not increase FI] \label{feature importance: prop: cloning does not increase FI}
    For any RV $X\in \FISET$, adding an identical variable $X^{\text{clone}}:= X$ (cloning) to the dataset, does not increase the FI of $X$.
\end{property}

\begin{proof}
    Let $\FINEW{X}$ denote the FI of $X$ after the clone $X^{\text{clone}}$ is added. Using \Cref{feature importance: eq: formal definition bp-fi} (with \Cref{feature importance: eq: definition w,feature importance: eq: definition D}), we
    find
    \begin{align*}
        \FINEW{X} & =
        \sum_{S\subseteq \FISET \cup \{X^{\text{clone}}\} \setminus \{X\}} w(S, \nvariables + 1) \cdot D(X,Y,S)                                                                                                      \\[4ex]
                  & \stackrel{(a)}{=}
        \begin{multlined}[t][0.76\linewidth]
            \sum_{S\subseteq \FISET \cup \{X^{\text{clone}}\} \setminus \{X\} : X^{\text{clone}} \in S} w(S, \nvariables + 1) \cdot
            D(X,Y,S) \\[1ex]
            + \sum_{S\subseteq \FISET \cup \{X^{\text{clone}}\} \setminus \{X\} : X^{\text{clone}} \not \in S} w(S, \nvariables + 1) \cdot D(X,Y,S)
        \end{multlined}                                       \\[4ex]
                  & \stackrel{(b)}{=}
        \begin{multlined}[t][0.76\linewidth]
            \sum_{S\subseteq \FISET \cup \{X^{\text{clone}}\} \setminus \{X\} : X^{\text{clone}} \in S} w(S, \nvariables + 1) \cdot 0                                                 \\[1ex]
            + \sum_{S\subseteq \FISET \cup \{X^{\text{clone}}\} \setminus \{X\} : X^{\text{clone}} \not \in S} w(S, \nvariables + 1) \cdot D(X,Y,S)
        \end{multlined} \\[4ex]
                  & =
        \sum_{S\subseteq \FISET \setminus \{X\}} w(S, \nvariables + 1) \cdot  D(X,Y,S).
    \end{align*}
    Equality (a) follows by splitting the sum over all subsets of $\FISET \cup \{X^{\text{clone}}\} \setminus \{X\}$ whether $X^{\text{clone}}$ is part of the subset or not. Adding $X$ to a subset that already contains the clone $X^{\text{clone}}$ does not change the \bpdep, which is why Equality (b) follows. The takeaway from this derivation is that the sum over all subsets $S\subseteq \FISET \cup \{X^{\text{clone}}\} \setminus \{X\}$ reduces to the sum over $S\subseteq \FISET \setminus \{X\}$.

    Comparing the new $\FINEW{X}$ with the original $\FI{X}$ gives
    \begin{multline*}
        \FI{X} - \FINEW{X}  =  \sum_{S\subseteq \FISET \setminus \{X\}} w(S, \nvariables) \cdot D(X,Y,S) \\[1ex]
        - \sum_{S\subseteq \FISET \setminus \{X\}} w(S, \nvariables + 1) \cdot  D(X,Y,S).
    \end{multline*}
    Using \Cref{feature importance: eq: definition w}, we find that
    \begin{align*}
        \frac{w(S, \nvariables + 1)}{w(S, \nvariables)} & = \frac{\frac{|S|! \cdot (\nvariables + 1 - |S| - 1)!}{(\nvariables + 1)!}}{\frac{|S|! \cdot (\nvariables - |S| - 1)!}{\nvariables!}} = \frac{\nvariables - \vert S \vert}{\nvariables + 1} < 1,
    \end{align*}
    thus $\FI{X} - \FINEW{X}\geq 0$ with equality if and only if $\FI{X} = 0$. Therefore, we can conclude that cloning a variable cannot increase the FI of $X$ and will decrease the FI when $X$ is \emph{null-independent}.
\end{proof}

\begin{relevance*}
    We consider this a natural property of a good FI method, as no logical reason can be found why adding the exact same information would lead to an increase in FI for the original variable. The information a variable contains only becomes less valuable, as it becomes common knowledge.
\end{relevance*}

\begin{property}[Order does not change FI] \label{feature importance: prop: order does not change FI}
    The order of the features does not affect the individually assigned FI. Consider the datasets $[X_1, X_2, \dots, X_{\nvariables}]$ and $[Z_1, Z_2, \dots, Z_{\nvariables}]$, where $Z_{\pi(i)} = X_i$ for some permutation $\pi$. It holds that $\FI{X_i} = \FI{Z_{\pi(i)}}$ for any $i\in \{1,\dots, \nvariables\}.$
\end{property}
\begin{proof}
    Note that the order of features nowhere plays a roll in the definition of \bpfi{} (\Cref{feature importance: eq: formal definition bp-fi}). The \bpdep{} is also independent of the given order, which is why this property trivially holds.
\end{proof}

\begin{relevance*}
    This is a very natural property of a good FI. Consider what would happen if the FI is dependent on the order in the dataset. Should all possible orders be evaluated and averaged to receive a final FI? We cannot find any arguments why someone should want FI to be dependent on the order of features.
\end{relevance*}

\subsection*{Datasets}
Next, we consider a few datasets, where we derive the theoretical outcome for the \bpfi{}. These datasets are also used in \Cref{feature importance: subsec: Property evaluation} to test FI methods. It is very hard to evaluate FI methods, as the ground truth is often unknown. However, we believe that the FI outcomes on these datasets are all natural and defendable. However, it remains subjective what one considers to be the `correct' FI values.

\begin{property}[XOR dataset] \label{feature importance: prop: xor dataset}
    Consider the following dataset consisting of two binary features $X_1, X_2\sim \mathcal{U}(\{0,1\})$ and a target variable $Y =  X_1 \cdot (1-X_2) + X_2 \cdot (1-X_1)$ which is the XOR of $X_1$ and $X_2$. It holds that \begin{align*}
        \FI{X_1} = \FI{X_2} = \frac{1}{2}.
    \end{align*}
\end{property}
\begin{proof}
    Observe that $\Dep{X_1}{Y} = \Dep{X_2}{Y} = 0$ and $\Dep{X_1 \cup X_2}{Y} = 1.$ With \Cref{feature importance: eq: formal definition bp-fi}, it follows that \begin{align*}
        \FI{X_1} & =  \sum_{S\subseteq \{1,2\} \setminus X_1} \frac{|S|! \cdot (1 - |S|)!}{2!} \cdot \left ( \Dep{S\cup X_1}{Y} - \Dep{S}{Y} \right)        \\[4ex]
                 & =
        \begin{multlined}[t][0.76\linewidth]
            \frac{|\{\emptyset\}|! \cdot (1 - |\{\emptyset\}|)!}{2!} \cdot \left ( \Dep{\{\emptyset\}\cup X_1}{Y} - \Dep{\{\emptyset\}}{Y} \right)       \\[1ex]
            + \frac{|\{X_2\}|! \cdot (1 - |\{X_2\}|)!}{2!} \cdot \left ( \Dep{X_1\cup X_2}{Y} - \Dep{X_2}{Y} \right)
        \end{multlined} \\[4ex]
                 & =
        \frac{1}{2} \cdot \left ( \Dep{X_1}{Y} - 0  \right ) + \frac{1}{2} \cdot \left ( \Dep{X_1 \cup X_2}{Y} - \Dep{X_2}{Y} \right)                       \\[2ex]
                 & =
        \frac{1}{2} \cdot 0 + \frac{1}{2} \cdot \left ( 1 - 0 \right)                                                                                       \\[2ex]
                 & =  \frac{1}{2}.
    \end{align*}
    Using \Cref{feature importance: prop: efficiency}, it follows that $\FI{X_2} = 1 - \FI{X_1} = \frac{1}{2}$.
\end{proof}

\begin{relevance*}
    This XOR formula is discussed and used to test FI methods in \cite{Fryer2021}. However, they only test for equality ($\FI{X_1} = \FI{X_2}$), not the specific value. Due to \emph{symmetry}, we would also argue that both $X_1$ and $X_2$ should get the same FI, as they fulfill the same role. Together, they fully determine $Y$, which is why the total FI should be one (see \Cref{feature importance: prop: FI equal to one}). Dividing this equally amongst the two variables, gives a logical desirable FI outcome of $\frac{1}{2}$ for each variable.
\end{relevance*}

\begin{property}[Probability dataset] \label{feature importance: prop: probability dataset}
    Consider the following dataset consisting of $Y = \lfloor X_S / 2 \rfloor$ and $X_i = Z_i + (S-1)$ with $Z_i \sim\UU{\{0,2\}}$ for $i = 1,2$ and $\PP(S=1) = p$, $\PP(S=2)= 1-p$. It holds that \begin{align*}
        \FI{X_1} = p \text{ and } \FI{X_2} = 1-p.
    \end{align*}
\end{property}

\begin{proof}

    Observe that by \Cref{feature importance: eq: definition discrete ud} \begin{align*}
        \UD{X_1}{Y} & =\sum_{x_1 \in \{0,1,2,3\}} p_{X_1}(x_1) \cdot \sum_{y \in \{0,1\}} \left \vert p_{Y\vert X_1=x_1}(y)   - p_Y(y) \right\vert              \\[4ex]
                    & =
        \begin{multlined}[t][0.76\linewidth]
            \sum_{x_1 \in \{0,2\}} p_{X_1}(x_1) \cdot \sum_{y \in \{0,1\}} \left \vert p_{Y\vert X_1=x_1}(y)   - \frac{1}{2} \right\vert                 \\[1ex]
            +  \sum_{x_1 \in \{1,3\}} p_{X_1}(x_1) \cdot \sum_{y \in \{0,1\}} \left \vert p_{Y\vert X_1=x_1}(y)   - \frac{1}{2} \right\vert         \end{multlined} \\[4ex]
                    & =
        \begin{multlined}[t][0.76\linewidth]
            \sum_{x_1 \in \{0,2\}} \frac{p}{2} \cdot \left (\left \vert 1 - \frac{1}{2} \right \vert + \left \vert 0 - \frac{1}{2} \right \vert \right ) \\[1ex]
            +  \sum_{x_1 \in \{1,3\}} \frac{1-p}{2} \cdot \sum_{y \in \{0,1\}} \left \vert p_{Y}(y)   - p_Y(y) \right\vert                          \end{multlined} \\[4ex]
                    & = p.
    \end{align*}
    Similarly, it follows that $\UD{X_2}{Y} = 1-p$.
    \begin{align*}
        \UD{Y}{Y} & =\sum_{y' \in \{0,1\}} p_{Y}(y') \cdot \sum_{y \in \{0,1\}} \left \vert p_{Y\vert Y=y'}(y)   - p_Y(y) \right\vert                             \\[2ex]
                  & = \sum_{y' \in \{0,1\}} \frac{1}{2} \cdot \left (\left \vert 1 - \frac{1}{2} \right \vert + \left \vert 0 - \frac{1}{2} \right \vert \right ) \\[2ex]
                  & = 1.
    \end{align*}

    From \Cref{feature importance: eq: definition bp dependency}, it follows that $\Dep{X_1}{Y} = p$ and $\Dep{X_2}{Y} = 1-p$. Additionally, note that knowing $X_1$ and $X_2$ fully determines $Y$, thus $\Dep{X_1\cup X_2}{Y} = 1$. With \Cref{feature importance: eq: formal definition bp-fi}, we now find
    \begin{align*}
        \FI{X_1} & = \sum_{S\subseteq \{X_1,X_2\} \setminus X_1} \frac{|S|! \cdot (1 - |S|)!}{2!} \cdot \left ( \Dep{S\cup X_1}{Y} - \Dep{S}{Y} \right)        \\[4ex]
                 & =
        \begin{multlined}[t][0.76\linewidth]
            \frac{|\{\emptyset\}|! \cdot (1 - |\{\emptyset\}|)!}{2!} \cdot \left ( \Dep{\{\emptyset\}\cup X_1}{Y} - \Dep{\{\emptyset\}}{Y} \right)          \\[1ex]
            + \frac{|\{X_2\}|! \cdot (1 - |\{X_2\}|)!}{2!} \cdot \left ( \Dep{X_1\cup X_2}{Y} - \Dep{X_2}{Y} \right)                 \end{multlined} \\[4ex]
                 & =
        \frac{1}{2} \cdot \left ( \Dep{X_1}{Y} - 0  \right ) + \frac{1}{2} \cdot \left ( \Dep{X_1 \cup X_2}{Y} - \Dep{X_2}{Y} \right)                          \\[2ex]
                 & =           \frac{1}{2} \cdot \left ( p - 0  \right ) + \frac{1}{2} \cdot \left ( 1 - (1-p) \right)                                         \\[2ex]
                 & = \frac{p}{2} + \frac{p}{2} = p.
    \end{align*}
    Using \Cref{feature importance: prop: efficiency}, it follows that $\FI{X_2} = 1 - \FI{X_1} = 1-p.$
\end{proof}

\begin{relevance*}
    At first glance, it is not immediately clear why these FI values are natural, which is why we discuss this dataset in more detail. $S$ can be considered a selection parameter that determines if $X_1$ or $X_2$ is used for $Y$ with probability $p$ and $1-p$, respectively. $X_i$ is constructed in such a way that it is uniformly drawn from $\{0,2\}$ or $\{1,3\}$ depending on $S$. However, as $Y = \lfloor X_S / 2 \rfloor$, it holds that $X_S=0$ and $X_S=1$ give the same outcome for $Y$. The same holds for $X_S= 2$ and $X_S =3$. Therefore, note that the distribution of $Y$ is independent of the selection parameter $S$. Knowing $X_1$ gives the following information. First, $S$ can be derived from the value of $X_1.$ When $X_1\in\{0,2\}$ it must hold that $S=1$, and if $X_1\in\{1,3\}$ it follows that $S=2$. Second, when $S=1$ it means that $Y$ is fully determined by $X_1$. If $S=2$, knowing that $X_1 = 1$ or $X_1 = 3$ does not provide any additional information about $Y$. With probability $p$ knowing $X_1$ will fully determine $Y$, whereas with probability $1-p$, it will provide no information about the distribution of $Y$. The outcome $\FI{X_1} = p$, is therefore very natural. The same argumentation applies for $X_2$, which leads to $\FI{X_2} = 1-p.$

\end{relevance*}

\section{Comparing with existing methods} \label{feature importance: sec: Comparing with existing methods}


In the previous section, we showed that \bpfi{} has many desirable properties. Next, we evaluate for a large collection of FI methods if the properties hold for several synthetic datasets. Note that these datasets can only be used as counterexample, not as proof of a property. First, we discuss the in \Cref{feature importance: subsec: Alternative FI methods} the FI methods that are investigated. Second, we give the datasets (\Cref{feature importance: subsec: Synthetic datasets}) and explain how they are used to test the properties (\Cref{feature importance: subsec: Property evaluation}). The results are discussed in \Cref{feature importance: subsec: Evaluation results}.

\subsection{Alternative FI methods} \label{feature importance: subsec: Alternative FI methods}


A wide range of FI methods have been suggested for all kinds of situations. It is therefore first necessary to discuss the major categorical differences between them.

\paragraph*{Global vs. local}
An important distinction to make for FI methods is whether they are constructed for \emph{local} or \emph{global} explanations. \emph{Global} FI methods give an importance score for each feature over the entire dataset, whereas \emph{local} FI methods explain which variables were important for a single example \cite{Ghorbani2022}. The global and local scores do not have to coincide: \enquote{features that are globally important may not be important in the local context, and vice versa} \cite{Ribeiro2016}. This research is focussed on global FI methods, but sometimes a local FI approach can be averaged out to obtain a global FI. For example, in \cite{Lundberg2018} a local FI method is introduced called \emph{Tree SHAP}. It is also used globally, by averaging the absolute values of the local FI.

\paragraph*{Model-specific vs. model-agnostic}
A distinction within FI methods can be made between model-\emph{specific} and -\emph{agnostic} methods. \emph{Model-specific} methods aim to find the FI using a prediction model such as a neural network or random forest, whereas \emph{model-agnostic} methods do not use a prediction model. The \bpfi{} is model-agnostic, which therefore gives insights into the dataset. Whenever a model-specific method is used, the focus lies more on gaining information about the prediction model, not the dataset. In our tests, we use both model-specific and -agnostic methods.

\paragraph*{Classification vs. regression}
Depending on the exact dataset, the target variable is either \emph{categorical} or \emph{numerical}, which is precisely the difference between \emph{classification} and \emph{regression}. Not all existing FI methods can handle both cases. In this research, we generate synthetic \emph{classification} datasets, so we only examine FI methods that are intended for these cases. An additional problem with regression datasets, is that continuous variables need to be converted to discrete bins. This conversion could drastically change the FI scores, which makes it harder to draw fair conclusions.

\paragraph*{Collection}
We have gathered the largest known collection of FI methods from various sources \cite{scikit-learn, Rvip, Rcaret, RFSinR, Shap, Onig2021, XGBoost, Sage, qii, Scipy, Rebelosa, Relief, ITMO, Fryer2021, Abe2005, Rinfotheo, Fisher, Ghorbani2022, RPartykit, RVita, Treeinterpreter, Diffi} or implemented them ourselves. This has been done with the following policy: Whenever code of a \emph{classification} FI method was available in \textsf{R} or Python or the implementation was relatively straightforward, it was added to the collection. This resulted in 196 \emph{base} methods and \var{n_methods} total methods, as some base methods can be combined with \emph{multiple} machine learning approaches or selection objectives, see \Cref{feature importance: tab: FI methods}. However, beware that most methods also contain additional parameters, which are not investigated in this research. The \emph{default} values for these parameters are always used.

\begingroup

\begin{table*}
    \fboxsep=2mm \fboxrule=0.1mm
    \caption{\textbf{All evaluated FI methods:} List of all FI methods that are evaluated in the experiments. The colored methods work in combination with multiple options: {\footnotesize \slshape
    Logistic Regression\textsuperscript{\cone, \ctwo, \cthree},
    Ridge\textsuperscript{\cone, \ctwo},
    Linear Regression\textsuperscript{\cone, \ctwo},
    Lasso\textsuperscript{\cone, \ctwo},
    SGD Classifier\textsuperscript{\cone, \cthree},
    MLP Classifier\textsuperscript{\cone, \ctwo},
    K Neighbors Classifier\textsuperscript{\cone, \ctwo},
    Gradient Boosting Classifier\textsuperscript{\cone, \ctwo, \cfour},
    AdaBoost Classifier\textsuperscript{\cone, \ctwo},
    Gaussian NB\textsuperscript{\cone, \ctwo},
    Bernoulli NB\textsuperscript{\cone, \ctwo},
    Linear Discriminant Analysis\textsuperscript{\cone, \ctwo},
    Decision Tree Classifier\textsuperscript{\cone, \ctwo, \cfour, \cfive},
    Random Forest Classifier\textsuperscript{\cone, \ctwo, \cfour, \cfive},
    SVC\textsuperscript{\cone},
    CatBoost Classifier\textsuperscript{\cone, \ctwo},
    LGBM Classifier\textsuperscript{\cone, \ctwo, \cfour},
    XGB Classifier\textsuperscript{\cone, \ctwo, \cfour, \cseven},
    XGBRF Classifier\textsuperscript{\cone, \ctwo, \cfour, \cseven},
    ExtraTree Classifier\textsuperscript{\cfour, \cfive},
    ExtraTrees Classifier\textsuperscript{\cfour, \cfive},
    plsda\textsuperscript{\csix},
    splsda\textsuperscript{\csix},
    gini\textsuperscript{\ceight},
    entropy\textsuperscript{\ceight},
    NN1\textsuperscript{\cnine},
    NN2\textsuperscript{\cnine}}.
    This leads to a total of 468 FI methods from various sources \cite{scikit-learn, Rvip, Rcaret, RFSinR, Shap, Onig2021, XGBoost, Sage, qii, Scipy, Rebelosa, Relief, ITMO, Fryer2021, Abe2005, Rinfotheo, Fisher, Ghorbani2022, RPartykit, RVita, Treeinterpreter, Diffi} or self-implemented.
    }

    \label{feature importance: tab: FI methods}
    \begin{center}
        \resizebox*{1.0\linewidth}{!}{
            \begin{tikzpicture}
                \def\colone{0.0};
                \def\collone{5.2};
                \def\coltwo{5.2};
                \def\colltwo{10.5};
                \def\colthree{0.2};
                \def\collthree{17.0};
                \def\colfour{17.0};
                \def\collfour{22.24};
                \def\rowh{0.422};
                \def\starty{20.85};

                \def\addx{1};

                \draw[fill = scheme4] (0,49 * \rowh+ + 0.17) rectangle (\collfour, 51 *\rowh + 0.06);

                \definecolor{colorborder1}{RGB}{72,191,142}
                \colorlet{colorback1}{colorborder1!20!white}
                \def\yh{\starty};
                \draw[colorborder1, dashed, fill = colorback1] (\colone, \yh) -- (\collfour, \yh) -- (\collfour, \yh- 3*\rowh) -- (\colone, \yh- 3*\rowh) -- cycle;


                \definecolor{colorborder2}{RGB}{153,28,100}
                \colorlet{colorback2}{colorborder2!20!white}
                \def\yh{\starty - 3 * \rowh};
                \draw[colorborder2, dashed, fill = colorback2] (\colone, \yh) -- (\collfour, \yh) -- (\collfour, \yh- 2*\rowh) -- (\colone, \yh- 2*\rowh) -- cycle;


                \definecolor{colorborder3}{RGB}{63,227,75}
                \colorlet{colorback3}{colorborder3!20!white}
                \def\yh{\starty - 5 * \rowh};
                \draw[colorborder3, dashed, fill = colorback3] (\colone, \yh) -- (\collfour, \yh) -- (\collfour, \yh- 1*\rowh) -- (\collthree, \yh- 1*\rowh) -- (\collthree, \yh- 2*\rowh) -- (\colone, \yh- 2*\rowh) -- cycle;


                \definecolor{colorborder4}{RGB}{163,53,200}
                \colorlet{colorback4}{colorborder4!20!white}
                \def\yh{\starty - 6 * \rowh};
                \draw[colorborder4, dashed, fill = colorback4] (\collfour, \yh) -- (\colfour, \yh) -- (\colfour, \yh- 1*\rowh) -- (\collfour, \yh- 1*\rowh);
                \def\yh{\starty - 7 * \rowh};
                \draw[colorborder4, dashed, fill = colorback4] (\colone, \yh) -- (\collone, \yh) -- (\collone, \yh- 1*\rowh) -- (\colone, \yh- 1*\rowh);


                \definecolor{colorborder5}{RGB}{186,227,66}
                \colorlet{colorback5}{colorborder5!20!white}
                \def\yh{\starty - 7 * \rowh};
                \draw[colorborder5, dashed, fill = colorback5] (\coltwo, \yh) -- (\collfour, \yh) -- (\collfour, \yh-1*\rowh) -- (\coltwo, \yh-1*\rowh) -- cycle;


                \definecolor{colorborder6}{RGB}{253,181,172}
                \colorlet{colorback6}{colorborder6!20!white}
                \def\yh{\starty - 8 * \rowh};
                \draw[colorborder6, dashed, fill = colorback6] (\colone, \yh) -- (\collfour, \yh) -- (\collfour, \yh- 3*\rowh) -- (\colone, \yh- 3*\rowh) -- cycle;


                \definecolor{colorborder7}{RGB}{161,207,207}
                \colorlet{colorback7}{colorborder7!20!white}
                \def\yh{\starty - 11 * \rowh};
                \draw[colorborder7, dashed, fill = colorback7] (\colone, \yh) -- (\collthree, \yh) -- (\collthree, \yh- 1*\rowh) -- (\colone, \yh- 1*\rowh) -- cycle;


                \definecolor{colorborder8}{RGB}{144,14,8}
                \colorlet{colorback8}{colorborder8!20!white}
                \def\yh{\starty - 11 * \rowh};
                \draw[colorborder8, dashed, fill = colorback8] (\colfour, \yh) -- (\collfour, \yh) -- (\collfour, \yh- 16*\rowh) -- (\collone, \yh- 16*\rowh) --  (\collone, \yh- 17*\rowh) -- (\colone, \yh- 17*\rowh) -- (\colone, \yh- 1*\rowh) -- (\colfour, \yh- 1*\rowh) -- cycle;


                \definecolor{colorborder9}{RGB}{230,195,82}
                \colorlet{colorback9}{colorborder9!20!white}
                \def\yh{\starty - 27 * \rowh};
                \draw[colorborder9, dashed, fill = colorback9] (\coltwo, \yh) -- (\collfour, \yh) -- (\collfour, \yh- 9*\rowh) -- (\colone, \yh- 9*\rowh) -- (\colone, \yh- 1*\rowh) -- (\coltwo, \yh- 1*\rowh) -- cycle;


                \colorlet{colorborder10}{red!10!orange}
                \colorlet{colorback10}{colorborder10!20!white}
                \def\yh{\starty - 36 * \rowh};
                \draw[colorborder10, dashed, fill = colorback10] (\colone, \yh) -- (\collfour, \yh) -- (\collfour, \yh- 2*\rowh) -- (\collthree, \yh- 2*\rowh) -- (\collthree, \yh- 3*\rowh) -- (\colone, \yh- 3*\rowh) -- cycle;


                \definecolor{colorborder11}{RGB}{101,161,14}
                \colorlet{colorback11}{colorborder11!20!white}
                \def\yh{\starty - 38 * \rowh};
                \draw[colorborder11, dashed, fill = colorback11] (\collfour, \yh) -- (\colfour, \yh) -- (\colfour, \yh- 1*\rowh) -- (\collfour, \yh- 1*\rowh);
                \def\yh{\starty - 39 * \rowh};
                \draw[colorborder11, dashed, fill = colorback11] (\colone, \yh) -- (\collthree, \yh) -- (\collthree, \yh- 1*\rowh) -- (\colone, \yh- 1*\rowh);


                \definecolor{colorborder12}{RGB}{8,87,130}
                \colorlet{colorback12}{colorborder12!20!white}
                \def\yh{\starty - 39 * \rowh};
                \draw[colorborder12, dashed, fill = colorback12] (\collfour, \yh) -- (\colfour, \yh) -- (\colfour, \yh- 1*\rowh) -- (\collfour, \yh- 1*\rowh);
                \def\yh{\starty - 40 * \rowh};
                \draw[colorborder12, dashed, fill = colorback12] (\colone, \yh) -- (\collone, \yh) -- (\collone, \yh- 1*\rowh) -- (\colone, \yh- 1*\rowh);


                \definecolor{colorborder13}{RGB}{253,146,250}
                \colorlet{colorback13}{colorborder13!20!white}
                \def\yh{\starty - 40 * \rowh};
                \draw[colorborder13, dashed, fill = colorback13] (\collfour, \yh) -- (\coltwo, \yh) -- (\coltwo, \yh- 1*\rowh) -- (\collfour, \yh- 1*\rowh);
                \def\yh{\starty - 41 * \rowh};
                \draw[colorborder13, dashed, fill = colorback13] (\colone, \yh) -- (\collone, \yh) -- (\collone, \yh- 1*\rowh) -- (\colone, \yh- 1*\rowh);


                \definecolor{colorborder14}{RGB}{104,60,0}
                \colorlet{colorback14}{colorborder14!20!white}
                \def\yh{\starty - 41 * \rowh};
                \draw[colorborder14, dashed, fill = colorback14] (\coltwo, \yh) -- (\collfour, \yh) -- (\collfour, \yh-\rowh) -- (\coltwo, \yh-\rowh) -- cycle;


                \definecolor{colorborder15}{RGB}{36,165,247}
                \colorlet{colorback15}{colorborder15!20!white}
                \def\yh{\starty - 42 * \rowh};
                \draw[colorborder15, dashed, fill = colorback15] (\colone, \yh) -- (\collfour, \yh) -- (\collfour, \yh- 7*\rowh) -- (\colone, \yh- 7*\rowh) -- cycle;


                \definecolor{colorborder16}{RGB}{180,114,107}
                \colorlet{colorback16}{colorborder16!20!white}
                \def\yh{\starty - 49 * \rowh};
                \draw[colorborder16, dashed, fill = colorback16] (\colone, \yh) -- (\collfour, \yh) -- (\collfour, \yh- 1*\rowh) -- (\collone, \yh- 1*\rowh) -- (\collone, \yh- 2*\rowh - 0.165) -- (\colone, \yh- 2*\rowh - 0.165) -- cycle;


                \definecolor{colorborder17}{RGB}{251,9,152}
                \colorlet{colorback17}{colorborder17!20!white}
                \def\yh{\starty - 48 * \rowh};
                \draw[colorborder17, dashed, fill = colorback17] (\coltwo, \yh - 2 * \rowh) -- (\colltwo, \yh - 2 * \rowh) -- (\colltwo, \yh- 3*\rowh - 0.165) -- (\coltwo, \yh- 3*\rowh - 0.165) -- cycle;


                \tikzset{shift = {(0,-2*\rowh)}}
                \node[rectangle, draw = boxcolor, align = center, anchor = south west] at (0,0){
                    {\headfont \bf Feature Importance methods} \\[0.5em]  \footnotesize
                    \begin{tabular}{llll}
                        1. AdaBoost Classifier                   & 2. \itemcoloreight{Random Forest Classifier} & 3. \itemcoloreight{Extra Trees Classifier}          & 4. Gradient Boosting Classifier          \\
                        5. SVR absolute weights                  & 6. EL absolute weights                       & 7. \itemcolorone{Permutation Importance Classifier} & 8. PCA sum                               \\
                        9. PCA weighted                          & 10. chi2                                     & 11. f classif                                       & 12. mutual info classif                  \\
                        13. KL divergence                        & 14. \textsf{R} Mutual Information            & 15. Fisher Score                                    & 16. FeatureVec                           \\
                        17. \textsf{R} Varimp Classifier         & 18. \textsf{R} PIMP Classifier               & 19. \itemcolorfive{Treeinterpreter Classifier}      & 20. DIFFI                                \\
                        21. \itemcolorfour{Tree Classifier}      & 22. \itemcolorthree{Linear Classifier}       & 23. \itemcolorone{Permutation Classifier}           & 24. \itemcolorone{Partition Classifier}  \\
                        25. \itemcolorone{Sampling Classifier}   & 26. \itemcolorone{Kernel Classifier}         & 27. \itemcolorone{Exact Classifier}                 & 28. \itemcolorone{RFI Classifier}        \\
                        29. \itemcolorone{CFI Classifier}        & 30. \itemcolorsix{Sum Classifier}            & 31. \itemcolorsix{Weighted X Classifier}            & 32. \itemcolorsix{Weighted Y Classifier} \\
                        33. f oneway                             & 34. alexandergovern                          & 35. pearsonr                                        & 36. spearmanr                            \\
                        37. pointbiserialr                       & 38. kendalltau                               & 39. weightedtau                                     & 40. somersd                              \\
                        41. linregress                           & 42. siegelslopes                             & 43. theilslopes                                     & 44. multiscale graphcorr                 \\
                        45. \itemcolorseven{booster weight}      & 46. \itemcolorseven{booster gain}            & 47. \itemcolorseven{booster cover}                  & 48. snn                                  \\
                        49. knn                                  & 50. bayesglm                                 & 51. lssvmRadial                                     & 52. rocc                                 \\
                        53. ownn                                 & 54. ORFpls                                   & 55. rFerns                                          & 56. treebag                              \\
                        57. RRF                                  & 58. svmRadial                                & 59. ctree2                                          & 60. evtree                               \\
                        61. pda                                  & 62. rpart                                    & 63. cforest                                         & 64. svmLinear                            \\
                        65. xyf                                  & 66. C5.0Tree                                 & 67. avNNet                                          & 68. kknn                                 \\
                        69. svmRadialCost                        & 70. gaussprRadial                            & 71. FH.GBML                                         & 72. svmLinear2                           \\
                        73. bstSm                                & 74. LogitBoost                               & 75. wsrf                                            & 76. plr                                  \\
                        77. xgbLinear                            & 78. rf                                       & 79. null                                            & 80. protoclass                           \\
                        81. monmlp                               & 82. Rborist                                  & 83. mlpWeightDecay                                  & 84. svmRadialWeights                     \\
                        85. mlpML                                & 86. ctree                                    & 87. loclda                                          & 88. sdwd                                 \\
                        89. mlpWeightDecayML                     & 90. svmRadialSigma                           & 91. bstTree                                         & 92. dnn                                  \\
                        93. ordinalRF                            & 94. pda2                                     & 95. BstLm                                           & 96. RRFglobal                            \\
                        97. mlp                                  & 98. rpart1SE                                 & 99. pcaNNet                                         & 100. ORFsvm                              \\
                        101. parRF                               & 102. rpart2                                  & 103. gaussprPoly                                    & 104. C5.0Rules                           \\
                        105. rda                                 & 106. rbfDDA                                  & 107. multinom                                       & 108. gaussprLinear                       \\
                        109. svmPoly                             & 110. knn                                     & 111. treebag                                        & 112. RRF                                 \\
                        113. ctree2                              & 114. evtree                                  & 115. pda                                            & 116. rpart                               \\
                        117. cforest                             & 118. xyf                                     & 119. C5.0Tree                                       & 120. kknn                                \\
                        121. gaussprRadial                       & 122. LogitBoost                              & 123. wsrf                                           & 124. xgbLinear                           \\
                        125. rf                                  & 126. null                                    & 127. monmlp                                         & 128. Rborist                             \\
                        129. mlpWeightDecay                      & 130. mlpML                                   & 131. ctree                                          & 132. mlpWeightDecayML                    \\
                        133. dnn                                 & 134. pda2                                    & 135. RRFglobal                                      & 136. mlp                                 \\
                        137. rpart1SE                            & 138. parRF                                   & 139. rpart2                                         & 140. gaussprPoly                         \\
                        141. C5.0Rules                           & 142. rbfDDA                                  & 143. multinom                                       & 144. gaussprLinear                       \\
                        145. binaryConsistency                   & 146. chiSquared                              & 147. cramer                                         & 148. gainRatio                           \\
                        149. giniIndex                           & 150. IEConsistency                           & 151. IEPConsistency                                 & 152. mutualInformation                   \\
                        153. roughsetConsistency                 & 154. ReliefFeatureSetMeasure                 & 155. symmetricalUncertain                           & 156. \itemcolortwo{IteratedEstimator}    \\
                        157. \itemcolortwo{PermutationEstimator} & 158. \itemcolortwo{KernelEstimator}          & 159. \itemcolortwo{SignEstimator}                   & 160. \itemcolorone{Shapley}              \\
                        161. \itemcolorone{Banzhaf}              & 162. RF                                      & 163. \itemcolornine{Garson}                         & 164. \itemcolornine{VIANN}               \\
                        165. \itemcolornine{LOFO}                & 166. Relief                                  & 167. ReliefF                                        & 168. RReliefF                            \\
                        169. fit criterion measure               & 170. f ratio measure                         & 171. gini index                                     & 172. su measure                          \\
                        173. spearman corr                       & 174. pearson corr                            & 175. fechner corr                                   & 176. kendall corr                        \\
                        177. chi2 measure                        & 178. anova                                   & 179. laplacian score                                & 180. information gain                    \\
                        181. modified t score                    & 182. MIM                                     & 183. MRMR                                           & 184. JMI                                 \\               185. Add: CIFE                           &
                        186. CMIM                                & 187. ICAP                                    & 188. DCSF                                                                                      \\          189. CFR                                 &
                        190. MRI                                 & 191. IWFS                                    & 192. NDFS                                                                                      \\         193. RFS                              &
                        194. SPEC                                & 195. MCFS                                    & 196. UDFS                                                                                      \\  197. R2 & 198. DC & 199. BCDC & 200. AIDC \\ 201. HSIC &     202. \textbf{\bpfi{}}
                    \end{tabular}
                };


                \draw[draw = boxcolor] (0,0) rectangle (\collfour, 53 *\rowh + 0.06);

                \tikzset{shift = {(0,-10*\rowh)}}
                \node[align = center, anchor = south] at (\collfour / 2,0){
                    \resizebox*{1.95\linewidth}{!}{
                        \footnotesize
                        \begin{tabular}{RLRRLRRLR}\topprule
                            \multicolumn{9}{C}{\headcellcolor \headfont \bf \normalsize Legend}                                                                                                                                                                                                                                                                                \\ \middrule
                            \trowcolor \trowfont \cellcolor{colorback1} 1-12     & sklearn                     & \cite{scikit-learn} & \cellcolor{colorback2} 13-20    & Additional methods          & \cite{Abe2005, Rinfotheo, Fisher, Ghorbani2022, RPartykit, RVita, Treeinterpreter, Diffi} & \cellcolor{colorback3} 21-27    & shap explainer             & \cite{Shap}  \\
                            \trowcolor \trowfont \cellcolor{colorback4} 28-29    & Relative feature importance & \cite{Onig2021}     & \cellcolor{colorback5} 30-32    & \textsf{R} vip              & \cite{Rvip}                                                                               & \cellcolor{colorback6} 33-44    & scipy stats                & \cite{Scipy} \\
                            \trowcolor \trowfont \cellcolor{colorback7} 45-47    & booster classifier          & \cite{XGBoost}      & \cellcolor{colorback8} 48-109   & \textsf{R} caret classifier & \cite{Rcaret}                                                                             & \cellcolor{colorback9} 110-144  & \textsf{R} firm classifier & \cite{Rvip}  \\
                            \trowcolor \trowfont \cellcolor{colorback10} 145-155 & \textsf{R} FSinR Classifier & \cite{RFSinR}       & \cellcolor{colorback11} 156-159 & Sage Classifier             & \cite{Sage}                                                                               & \cellcolor{colorback12} 160-161 & QII Averaged Classifier    & \cite{qii}   \\
                            \trowcolor \trowfont \cellcolor{colorback13} 162-165 & Rebelosa Classifier         & \cite{Rebelosa}     & \cellcolor{colorback14} 166-168 & Relief Classifier           & \cite{Relief}                                                                             & \cellcolor{colorback15} 169-196 & ITMO                       & \cite{ITMO}  \\
                            \trowcolor \trowfont \cellcolor{colorback16} 197-201 & Sunnies                     & \cite{Fryer2021}    & \cellcolor{colorback17} 202     & \textbf{\bpfi{}}            & -                                                                                         &                                 &                            &              \\ \bottommrule
                        \end{tabular}
                    }
                };
            \end{tikzpicture}
        }
    \end{center}
\end{table*}
\endgroup

\subsection{Synthetic datasets} \label{feature importance: subsec: Synthetic datasets}


Next, we briefly discuss the datasets that are used to test the properties described in \Cref{feature importance: sec: Properties of BP-FI} for alternative FI methods. In \Cref{feature importance: appendix: datasets}, we introduce each dataset and explain how they are generated. To draw fair conclusions, the datasets are not drawn randomly, but \emph{fixed}. To give an example of how we do generate a dataset, we examine \Cref{feature importance: dataset: 1} \emph{Binary system} (see \Cref{feature importance: appendix: datasets}), where the target variable $Y$ is defined as $Y := \sum_{i=1}^{3} 2^{i-1} \cdot X_i$ with $X_i\sim\UU{\{0,1\}}$ for all $i \in \{1,2,3\}$. To get interpretable results, we draw each combination of $X$ and $Y$ values the \emph{same number} of times. An example can be seen in \Cref{feature importance: tab: difference drawn uniform}. For most datasets, we draw 1{,}000 samples in total. However \Cref{feature importance: dataset: 6,feature importance: dataset: 7} consist of 2{,}000 samples to ensure null-independence. The datasets have been selected to be computationally inexpensive and to test many properties (see \Cref{feature importance: subsec: Property evaluation}) with a limited number of datasets. An overview of the generated datasets can be found in \Cref{feature importance: tab: summary datasets} including the corresponding outcome of \bpfi{}. \Cref{feature importance: appendix: datasets} provides more technical details about the features and target variables.

\begin{table}
    \caption{\textbf{Fixed draw:} Example of how the datasets are drawn. Instead of drawing each possible outcome uniformly at random, we draw each combination an equal fixed number of times.}
    \label{feature importance: tab: difference drawn uniform}
    \begin{center}
        \begin{adjustbox}{width=0.4\linewidth,keepaspectratio}
            \begin{tabular}{*{6}{C}} \topprule
                \multicolumn{4}{c}{\headcellcolor \headfont \textbf{Outcome}} & \multicolumn{2}{c}{\headcellcolor \headfont \textbf{\# Drawn}}                                                                                     \\
                \headrowcolor \subheadfont $X_1$                              & \subheadfont $X_2$                                             & \subheadfont $X_3$ & \subheadfont $Y$ & \subheadfont Fixed & \subheadfont Uniform \\ \middrule
                \trowcolor \trowfont 0                                        & 0                                                              & 0                  & 0                & 125                & 133                  \\
                \trowcolor \trowfont 0                                        & 0                                                              & 1                  & 4                & 125                & 129                  \\
                \trowcolor \trowfont 0                                        & 1                                                              & 0                  & 2                & 125                & 121                  \\
                \trowcolor \trowfont 0                                        & 1                                                              & 1                  & 6                & 125                & 109                  \\
                \trowcolor \trowfont 1                                        & 0                                                              & 0                  & 1                & 125                & 136                  \\
                \trowcolor \trowfont 1                                        & 0                                                              & 1                  & 5                & 125                & 124                  \\
                \trowcolor \trowfont 1                                        & 1                                                              & 0                  & 3                & 125                & 115                  \\
                \trowcolor \trowfont 1                                        & 1                                                              & 1                  & 7                & 125                & 133                  \\ \bottommrule
            \end{tabular}
        \end{adjustbox}
    \end{center}
\end{table}

\begin{sidewaystable*}
    \caption{\textbf{Overview of datasets:} An overview of the generated datasets and the corresponding \bpfi{} outcome. The details of these datasets can be found in \Cref{feature importance: appendix: datasets}. They are used to evaluate if existing FI methods adhere to the same properties as \bpfi{} (see \Cref{feature importance: subsec: Property evaluation}).}
    \label{feature importance: tab: summary datasets}
    \begin{center}
        \begin{adjustbox}{width=1.0\linewidth,keepaspectratio}
            \def\arraystretch{1.5}
            \begin{tabular}{LRLLL} \topprule
                \multicolumn{3}{c}{\headcellcolor \headfont \textbf{Dataset}} & \headcellcolor \headfont \textbf{Variables} & \headcellcolor \headfont \textbf{\bpfi{} outcome}                                                                                                                                                                                                     \\ \middrule
                \trowcolor \trowfont \textbf{Binary system}                   & 1.                                          & - base                                            & $(X_1, X_2,X_3)$                                                                                                                    & $\left (0.333, 0.333, 0.333 \right )$                       \\
                \trowcolor \trowfont                                          & 2.                                          & - clone                                           & $(X_1^{\text{clone}}, X_1, X_2,X_3)$                                                                                                & $\left (0.202, 0.202, 0.298, 0.298 \right )$                \\
                \trowcolor \trowfont                                          & 3.                                          & - clone + 1x fully info.                          & $(X_1^{\text{clone}}, X_1, X_2,X_3, X_4^{\text{full}})$                                                                             & $\left ( 0.148, 0.148, 0.183, 0.183, 0.338 \right )$        \\
                \trowcolor \trowfont                                          & 4.                                          & - clone + 2x fully info.                          & $(X_1^{\text{clone}}, X_1, X_2,X_3, X_4^{\text{full}}, X_5^{\text{full}})$                                                          & $\left ( 0.117, 0.117, 0.136, 0.136, 0.248, 0.248 \right )$ \\
                \trowcolor \trowfont                                          & 5.                                          & - clone + 2x fully info. (different order)        & $(X_3, X_4^{\text{full}}, X_5^{\text{full}}, X_1^{\text{clone}}, X_1, X_2)$                                                         & $\left ( 0.136, 0.248, 0.248, 0.117, 0.117, 0.136 \right )$ \\ \middddrule
                \trowcolor \trowfont \textbf{Null-independent system}         & 6.                                          & - base                                            & $(X_1^{\text{null-indep.}},X_2^{\text{null-indep.}},X_3^{\text{null-indep.}})$                                                      & $\left ( 0.000, 0.000, 0.000 \right )$                      \\
                \trowcolor \trowfont                                          & 7.                                          & - constant variable                               & $(X_1^{\text{null-indep.}},X_2^{\text{null-indep.}},X_3^{\text{null-indep.}}, X_4^{\text{const, null-indep.}})$                     & $\left ( 0.000, 0.000, 0.000, 0.000 \right )$               \\ \middddrule
                \trowcolor \trowfont \textbf{Increasing bins}                 & 8.                                          & - base                                            & $(X_1^{\text{bins}=10}, X_2^{\text{bins}=50}, X_3^{\text{bins}=1{,}000,~\text{full}})$                                              & $\left ( 0.297, 0.342, 0.361 \right )$                      \\
                \trowcolor \trowfont                                          & 9.                                          & - more variables                                  & $(X_1^{\text{bins}=10}, X_2^{\text{bins}=20}, X_3^{\text{bins}=50}, X_4^{\text{bins}=100}, X_5^{\text{bins}=1{,}000,~\text{full}})$ & $\left ( 0.179, 0.193, 0.204, 0.208, 0.216 \right )$        \\
                \trowcolor \trowfont                                          & 10.                                         & - clone (different order)                         & $(X_3^{\text{bins}=1{,}000,~\text{full}}, X_2^{\text{bins}=50}, X_1^{\text{bins}=10}, X_3^{\text{clone},~\text{full}})$             & $\left ( 0.262, 0.253, 0.223, 0.262 \right )$               \\ \middddrule
                \trowcolor \trowfont \textbf{Dependent system}                & 11.                                         & - 1x fully info.                                  & $(X_1^{\text{full}}, X_2^{\text{null-indep.}}, X_3^{\text{null-indep.}})$                                                           & $\left ( 1.000, 0.000, 0.000 \right )$                      \\
                \trowcolor \trowfont                                          & 12.                                         & - 2x fully info.                                  & $(X_1^{\text{full}}, X_2^{\text{full}}, X_3^{\text{null-indep.}})$                                                                  & $\left ( 0.500, 0.500, 0.000 \right )$                      \\
                \trowcolor \trowfont                                          & 13.                                         & - 3x fully info.                                  & $(X_1^{\text{full}}, X_2^{\text{full}}, X_3^{\text{full}})$                                                                         & $\left ( 0.333, 0.333, 0.333 \right )$                      \\ \middddrule
                \trowcolor \trowfont \textbf{XOR dataset}                     & 14.                                         & - base                                            & $(X_1, X_2)$                                                                                                                        & $\left ( 0.500, 0.500 \right )$                             \\
                \trowcolor \trowfont                                          & 15.                                         & - single variable                                 & $(X_1^{\text{null-indep.}})$                                                                                                        & $\left ( 0.000 \right )$                                    \\
                \trowcolor \trowfont                                          & 16.                                         & - clone                                           & $(X_1^{\text{clone}}, X_1, X_2)$                                                                                                    & $\left (0.167, 0.167, 0.667 \right )$                       \\
                \trowcolor \trowfont                                          & 17.                                         & - null-independent                                & $(X_1, X_2, X_3^{\text{null-indep.}})$                                                                                              & $\left ( 0.500, 0.500, 0.000 \right )$                      \\ \middddrule
                \trowcolor \trowfont \textbf{Probability dataset}             & 18-28.                                      & - for $p \in \{0, 0.1, \dots, 1\} $               & $(X_1, X_2) $                                                                                                                       & $\left ( p, 1-p \right )$
                \\ \bottommrule
            \end{tabular}
        \end{adjustbox}
    \end{center}
\end{sidewaystable*}

\subsection{Property evaluation} \label{feature importance: subsec: Property evaluation}


In \Cref{feature importance: subsec: Alternative FI methods}, we gathered a collection of existing FI methods. In this section, we evaluate if these FI methods have the same desirable and proven properties of the \bpfi{} method (see \Cref{feature importance: sec: Properties of BP-FI}). Due to the sheer number of FI methods (468), it is unfeasible to prove each property for every method. Instead, we devise tests to find counterexamples of these properties using generated datasets (see \Cref{feature importance: subsec: Synthetic datasets}). Due to the number of tests (18), we only discuss the parts that are not straightforward, as most test directly measure the corresponding property. An overview of each test can be found in \Cref{feature importance: appendix: tests}. A summary of the tests can be found in \Cref{feature importance: tab: summary experiments}, where it is outlined for each test which property is tested on which datasets.

\begin{sidewaystable*}
        \caption{\textbf{Overview of experiments:} To evaluate if existing FI methods have the same properties as the \bpfi{}, we use the tests from \Cref{feature importance: appendix: tests} on the datasets from \Cref{feature importance: appendix: datasets}. \cmark means that the test is performed on this dataset. \basemark (i) denotes that this dataset is used as baseline or in conjunction with dataset $i$. The details of the tests and datasets can be found in the appendix.}
        \label{feature importance: tab: summary experiments}
        \begin{adjustbox}{width=\linewidth,keepaspectratio}
            \def\x{28}
            \begin{tabular}{F{3 cm}C*{\x}{C}} \topprule
                \headrowcolor \tablerowfontcolor \textbf{Test}                                  & \textbf{Evaluates:}                                             & \multicolumn{28}{C}{\headfont \headcellcolor \textbf{Dataset (\Cref*{feature importance: appendix: datasets})} }                                                                                                                                                                                                                                                                                                                                                                                                                                                                                                                     \\
                \headrowcolor \tablerowfontcolor  (\Cref*{feature importance: appendix: tests}) & Property/Corollary                                              & \subheadfont 1                                                                                                   & \subheadfont 2       & \subheadfont 3      & \subheadfont 4 & \subheadfont 5 & \subheadfont 6 & \subheadfont 7 & \subheadfont 8  & \subheadfont 9 & \subheadfont 10 & \subheadfont 11   & \subheadfont 12 & \subheadfont 13 & \subheadfont 14         & \subheadfont 15 & \subheadfont 16 & \subheadfont 17 & \subheadfont 18 & \subheadfont 19 & \subheadfont 20 & \subheadfont 21 & \subheadfont 22 & \subheadfont 23 & \subheadfont 24 & \subheadfont 25 & \subheadfont 26 & \subheadfont 27 & \subheadfont 28 \\ \middrule
                \trowcolor \trowfont \ref{feature importance: test: 1}                          & \ref{feature importance: prop: efficiency}                      & \cmark                                                                                                           & \cmark               & \cmark              & \cmark         & \cmark         & \cmark         & \cmark         & \cmark          & \cmark         & \cmark          & \cmark            & \cmark          & \cmark          & \cmark                  & \cmark          & \cmark          & \cmark          & \cmark          & \cmark          & \cmark          & \cmark          & \cmark          & \cmark          & \cmark          & \cmark          & \cmark          & \cmark          & \cmark          \\
                \trowcolor \trowfont \ref{feature importance: test: 2}                          & \ref{feature importance: corollary: sum stable}                 & \basemark (2-5)                                                                                                  & \cmark               & \cmark              & \cmark         & \cmark         & \basemark (7)  & \cmark         & \basemark(9-10) & \cmark         & \cmark          & \basemark (12-13) & \cmark          & \cmark          & \basemark(16-17)        &                 & \cmark          & \cmark          &                 &                 &                 &                 &                 &                 &                 &                 &                 &                 &                 \\
                \trowcolor \trowfont \ref{feature importance: test: 3}                          & \ref{feature importance: prop: symmetry}                        & \cmark                                                                                                           & \cmark               & \cmark              & \cmark         & \cmark         & \cmark         & \cmark         &                 &                & \cmark          & \cmark            & \cmark          & \cmark          & \cmark                  &                 & \cmark          & \cmark          &                 &                 &                 &                 &                 & \cmark          &                 &                 &                 &                 &                 \\
                \trowcolor \trowfont \ref{feature importance: test: 4}                          & \ref{feature importance: prop: range}                           & \cmark                                                                                                           & \cmark               & \cmark              & \cmark         & \cmark         & \cmark         & \cmark         & \cmark          & \cmark         & \cmark          & \cmark            & \cmark          & \cmark          & \cmark                  & \cmark          & \cmark          & \cmark          & \cmark          & \cmark          & \cmark          & \cmark          & \cmark          & \cmark          & \cmark          & \cmark          & \cmark          & \cmark          & \cmark          \\
                \trowcolor \trowfont \ref{feature importance: test: 5}                          & \ref{feature importance: prop: range}                           & \cmark                                                                                                           & \cmark               & \cmark              & \cmark         & \cmark         & \cmark         & \cmark         & \cmark          & \cmark         & \cmark          & \cmark            & \cmark          & \cmark          & \cmark                  & \cmark          & \cmark          & \cmark          & \cmark          & \cmark          & \cmark          & \cmark          & \cmark          & \cmark          & \cmark          & \cmark          & \cmark          & \cmark          & \cmark          \\
                \trowcolor \trowfont \ref{feature importance: test: 6}                          & \ref{feature importance: prop: bounds}                          & \cmark                                                                                                           & \cmark               & \cmark              & \cmark         & \cmark         & \cmark         & \cmark         & \cmark          & \cmark         & \cmark          & \cmark            & \cmark          & \cmark          & \cmark                  & \cmark          & \cmark          & \cmark          & \cmark          & \cmark          & \cmark          & \cmark          & \cmark          & \cmark          & \cmark          & \cmark          & \cmark          & \cmark          & \cmark          \\
                \trowcolor \trowfont \ref{feature importance: test: 7}                          & \ref{feature importance: prop: bounds}                          & \cmark                                                                                                           & \cmark               & \cmark              & \cmark         & \cmark         & \cmark         & \cmark         & \cmark          & \cmark         & \cmark          & \cmark            & \cmark          & \cmark          & \cmark                  & \cmark          & \cmark          & \cmark          & \cmark          & \cmark          & \cmark          & \cmark          & \cmark          & \cmark          & \cmark          & \cmark          & \cmark          & \cmark          & \cmark          \\
                \trowcolor \trowfont \ref{feature importance: test: 8}                          & \ref{feature importance: prop: zero FI}                         &                                                                                                                  &                      &                     &                &                & \cmark         & \cmark         &                 &                &                 & \cmark            & \cmark          &                 &                         & \cmark          &                 & \cmark          & \cmark          &                 &                 &                 &                 &                 &                 &                 &                 &                 & \cmark          \\
                \trowcolor \trowfont \ref{feature importance: test: 9}                          & \ref{feature importance: prop: zero FI}                         & \cmark                                                                                                           & \cmark               & \cmark              & \cmark         & \cmark         & \cmark         & \cmark         & \cmark          & \cmark         & \cmark          & \cmark            & \cmark          & \cmark          & \cmark                  & \cmark          & \cmark          & \cmark          & \cmark          & \cmark          & \cmark          & \cmark          & \cmark          & \cmark          & \cmark          & \cmark          & \cmark          & \cmark          & \cmark          \\
                \trowcolor \trowfont \ref{feature importance: test: 10}                         & \ref{feature importance: prop: FI equal to one}                 &                                                                                                                  &                      &                     &                &                &                &                &                 &                &                 & \cmark            &                 &                 &                         &                 &                 &                 &                 &                 &                 &                 &                 &                 &                 &                 &                 &                 &                 \\
                \trowcolor \trowfont \ref{feature importance: test: 11}                         & \ref{feature importance: prop: max Fi when fully determined}    &                                                                                                                  &                      & \cmark              & \cmark         & \cmark         &                &                & \cmark          & \cmark         & \cmark          & \cmark            & \cmark          & \cmark          &                         &                 &                 &                 & \cmark          &                 &                 &                 &                 &                 &                 &                 &                 &                 & \cmark          \\
                \trowcolor \trowfont \ref{feature importance: test: 12}                         & \ref{feature importance: prop: limiting outcome space}          &                                                                                                                  &                      &                     &                &                &                &                & \cmark          & \cmark         & \cmark          &                   &                 &                 &                         &                 &                 &                 &                 &                 &                 &                 &                 &                 &                 &                 &                 &                 &                 \\
                \trowcolor \trowfont \ref{feature importance: test: 13}                         & \ref{feature importance: prop: adding features can increase FI} & \basemark (2)                                                                                                    & \cmark \basemark (3) & \cmark \basemark(4) & \cmark         &                & \basemark (7)  & \cmark         & \basemark(9-10) & \cmark         & \cmark          &                   &                 &                 & \cmark \basemark(16-17) & \basemark (14)  & \cmark          & \cmark          &                 &                 &                 &                 &                 &                 &                 &                 &                 &                 &                 \\
                \trowcolor \trowfont \ref{feature importance: test: 14}                         & \ref{feature importance: prop: adding features can decrease FI} & \basemark (2)                                                                                                    & \cmark \basemark (3) & \cmark \basemark(4) & \cmark         &                & \basemark (7)  & \cmark         & \basemark(9-10) & \cmark         & \cmark          &                   &                 &                 & \cmark \basemark(16-17) & \basemark (14)  & \cmark          & \cmark          &                 &                 &                 &                 &                 &                 &                 &                 &                 &                 &                 \\
                \trowcolor \trowfont \ref{feature importance: test: 15}                         & \ref{feature importance: prop: cloning does not increase FI}    & \basemark (2)                                                                                                    & \cmark               &                     &                &                &                &                & \basemark (10)  &                & \cmark          &                   &                 &                 &                         &                 & \basemark (16)  &                 & \cmark          &                 &                 &                 &                 &                 &                 &                 &                 &                 &                 \\
                \trowcolor \trowfont \ref{feature importance: test: 16}                         & \ref{feature importance: prop: order does not change FI}        &                                                                                                                  &                      &                     & \basemark (5)  & \cmark         &                &                &                 &                &                 &                   &                 &                 &                         &                 &                 &                 & \basemark (28)  & \basemark (27)  & \basemark (26)  & \basemark (25)  & \basemark (24)  &                 & \cmark          & \cmark          & \cmark          & \cmark          & \cmark          \\
                \trowcolor \trowfont \ref{feature importance: test: 17}                         & \ref{feature importance: prop: xor dataset}                     &                                                                                                                  &                      &                     &                &                &                &                &                 &                &                 &                   &                 &                 & \cmark                  &                 &                 & \cmark          &                 &                 &                 &                 &                 &                 &                 &                 &                 &                 &                 \\
                \trowcolor \trowfont \ref{feature importance: test: 18}                         & \ref{feature importance: prop: probability dataset}             &                                                                                                                  &                      &                     &                &                &                &                &                 &                &                 &                   &                 &                 &                         &                 &                 &                 & \cmark          & \cmark          & \cmark          & \cmark          & \cmark          & \cmark          & \cmark          & \cmark          & \cmark          & \cmark          & \cmark          \\
                \bottommrule
            \end{tabular}
        \end{adjustbox}
\end{sidewaystable*}
\paragraph*{Computational errors}
To allow for computational errors, we tolerate a margin of $\epsilon = 0.01$ in each test. If, e.g., an FI value should be zero, a score of $0.01$ or $-0.01$ is still considered a \emph{pass}, whereas an FI value of 0.05 is counted as a \emph{fail}. Usually, this works in the favor of the FI method. However, in \Cref{feature importance: test: 9} we evaluate if the FI method assigns zero FI to variables that are not null-independent. In this case, we consider $\vert \FI{X} \vert \leq \epsilon$ to be \emph{zero}, as the datasets are constructed in such a way that variables are either null-independent or far from being null-independent.

\paragraph*{Running time}
We limit the running time to one hour per dataset on an i7-12700K processor, whilst four algorithms are running simultaneously. The datasets consist of a small number of features with a very limited outcome space and the number of samples is either 1{,}000 or 2{,}000, which is why one hour is a reasonable amount of time.

\paragraph*{NaN or infinite values}
In some cases, an FI method assigns NaN or $\pm \infty$ to a feature. How we handle these values depends on the test. E.g., we consider NaN to fall outside the range $[0,1]$ (\Cref{feature importance: test: 4,feature importance: test: 5}{5}), but when we evaluate if the sum of FI values remains stable (\Cref{feature importance: test: 2}) or if two symmetric features receive the same FI (\Cref{feature importance: test: 3}), we consider twice NaN or twice $\pm \infty$ to be the same.

\paragraph*{Property 9 (Limiting the outcome space)}
\Cref{feature importance: prop: limiting outcome space} states that applying any measurable function $f$ to a RV $X$ cannot increase the FI. In other words, $\FI{X} \geq \FI{f(X)}$ holds. This property is tested using \Cref{feature importance: dataset: 8,feature importance: dataset: 9,feature importance: dataset: 10} (see \Cref{feature importance: tab: summary experiments}). These datasets contain variables that are the outcome of binning the target variable using different number of bins. This is how \Cref{feature importance: prop: limiting outcome space} is tested, as it should hold that $\FI{X_i} \geq \FI{X_j}$, whenever $X_i$ has more bins than $X_j$.

\paragraph*{Properties 11 and 12 (Adding features can increase/decrease FI)}
In all other tests, the goal is to find a counterexample of the property. However, \Cref{feature importance: test: 13,feature importance: test: 14} are designed to evaluate if a feature gets an increased/decreased FI when a feature is added. This increase/decrease should be more than $\epsilon$. The datasets are chosen in such a way that both an increase and decrease could occur (according to the \bpfi{}). Only for these tests, we consider the test failed if no counterexample (increase/decrease) is found.

\subsection{Evaluation results} \label{feature importance: subsec: Evaluation results}


An overview of the general results can be seen in \Cref{feature importance: tab: summary passed}, where the number of methods that \emph{pass} and \emph{fail} is given per test. Next, we highlight additional insights into the results of the experiments.

\paragraph*{Best performing methods}
The top 20 FI methods that pass the most tests are given in \Cref{feature importance: tab: best 20}. Out of 18 tests, the \bpfi{} passes all tests, which is as expected as we have proven in \Cref{feature importance: sec: Properties of BP-FI} that the \bpfi{} actually has these properties. Classifiers from \emph{\textsf{R} FSinR Classifier} and \emph{ITMO} fill 11 of the top 20 spots. Out of 11 \textsf{R} FSinR Classifier methods, six are in the top 20, which is quite remarkable. However, observe that the gap between the \bpfi{} method and the second best method is $18-11 = 7$ passed tests. Additionally, 424 out of 468 methods fail more than half of the tests.
\Cref{feature importance: fig: histogram passed test} shows how frequently each number of passed tests occurs. A detailed overview of where each top 20 method fails, can be seen in \Cref{feature importance: tab: summary passed}. Note again that in \Cref{feature importance: test: 13,feature importance: test: 14} it is considered a fail if adding features never increase or decrease the FI, respectively. It could be that these methods are in fact capable of increasing or decreasing, but for some reason do not with our datasets. Strikingly, most of these methods perform bad on the datasets with a desirable outcome (\Cref{feature importance: test: 17,feature importance: test: 18}). Adding a variable without additional information (\Cref{feature importance: test: 2}), also often leads to a change in total FI.

\begin{center}
    \begin{table*}
        \caption{\textbf{Overview of the results:} Each FI method is evaluated using the tests outlined in \Cref{feature importance: appendix: tests}, which evaluates if the method adheres to the same properties as the \bpfi{} (see \Cref{feature importance: sec: Properties of BP-FI}). This table summarizes out of 468 FI methods how many \emph{pass} or \emph{fail} the test. A distinction is made for the top 20 passing methods. Failing the test means that a counterexample is found. Note that passing the test does not `prove' that the FI method actually has the property. \emph{No result} indicates that the test could not be executed, because the running time of the FI method was too long or an error occurred.}
        \label{feature importance: tab: summary passed}
        \begin{adjustbox}{width=\linewidth,keepaspectratio}
            \def\x{18}
            \begin{tabular}{F{3 cm}*{\x}{R}} \topprule
                \headrowcolor \tablerowfontcolor                     & \multicolumn{\x}{C}{\textbf{Test}}                                                                                                                                                                                                                                                                                                                                                                                                                                                                                                                                                                                           \\
                \headrowcolor \tablerowfontcolor                     & \subheadfont 1                     & \subheadfont 2                 & \subheadfont 3                 & \subheadfont 4                 & \subheadfont 5                 & \subheadfont 6                 & \subheadfont 7                 & \subheadfont 8                 & \subheadfont 9                 & \subheadfont 10                 & \subheadfont 11                 & \subheadfont 12                 & \subheadfont 13                 & \subheadfont 14                 & \subheadfont 15                 & \subheadfont 16                 & \subheadfont 17                 & \subheadfont 18                 \\ \middrule
                \trowcolor \trowfont {\subheadfont \textbf{Overall}} &                                    &                                &                                &                                &                                &                                &                                &                                &                                &                                 &                                 &                                 &                                 &                                 &                                 &                                 &                                                                   \\
                \trowcolor \trowfont \# Passed                       & \var{Test_1_passed_test}           & \var{Test_2_passed_test}       & \var{Test_3_passed_test}       & \var{Test_4_passed_test}       & \var{Test_5_passed_test}       & \var{Test_6_passed_test}       & \var{Test_7_passed_test}       & \var{Test_8_passed_test}       & \var{Test_9_passed_test}       & \var{Test_10_passed_test}       & \var{Test_11_passed_test}       & \var{Test_12_passed_test}       & \var{Test_13_passed_test}       & \var{Test_14_passed_test}       & \var{Test_15_passed_test}       & \var{Test_16_passed_test}       & \var{Test_17_passed_test}       & \var{Test_18_passed_test}
                \\
                \trowcolor \trowfont \# Failed                       & \var{Test_1_failed_test}           & \var{Test_2_failed_test}       & \var{Test_3_failed_test}       & \var{Test_4_failed_test}       & \var{Test_5_failed_test}       & \var{Test_6_failed_test}       & \var{Test_7_failed_test}       & \var{Test_8_failed_test}       & \var{Test_9_failed_test}       & \var{Test_10_failed_test}       & \var{Test_11_failed_test}       & \var{Test_12_failed_test}       & \var{Test_13_failed_test}       & \var{Test_14_failed_test}       & \var{Test_15_failed_test}       & \var{Test_16_failed_test}       & \var{Test_17_failed_test}       & \var{Test_18_failed_test}
                \\
                \trowcolor \trowfont \# No result                    & \var{Test_1_no_result}             & \var{Test_2_no_result}         & \var{Test_3_no_result}         & \var{Test_4_no_result}         & \var{Test_5_no_result}         & \var{Test_6_no_result}         & \var{Test_7_no_result}         & \var{Test_8_no_result}         & \var{Test_9_no_result}         & \var{Test_10_no_result}         & \var{Test_11_no_result}         & \var{Test_12_no_result}         & \var{Test_13_no_result}         & \var{Test_14_no_result}         & \var{Test_15_no_result}         & \var{Test_16_no_result}         & \var{Test_17_no_result}         & \var{Test_18_no_result}
                \\ \middddrule
                \trowcolor \trowfont {\subheadfont \textbf{Top 20}}  &                                    &                                &                                &                                &                                &                                &                                &                                &                                &                                 &                                 &                                 &                                 &                                 &                                 &                                 &                                                                   \\

                \trowcolor \trowfont \# Passed                       & \var{table_1_test_1_passed}        & \var{table_1_test_2_passed}    & \var{table_1_test_3_passed}    & \var{table_1_test_4_passed}    & \var{table_1_test_5_passed}    & \var{table_1_test_6_passed}    & \var{table_1_test_7_passed}    & \var{table_1_test_8_passed}    & \var{table_1_test_9_passed}    & \var{table_1_test_10_passed}    & \var{table_1_test_11_passed}    & \var{table_1_test_12_passed}    & \var{table_1_test_13_passed}    & \var{table_1_test_14_passed}    & \var{table_1_test_15_passed}    & \var{table_1_test_16_passed}    & \var{table_1_test_17_passed}    & \var{table_1_test_18_passed}
                \\
                \trowcolor \trowfont \# Failed                       & \var{table_1_test_1_failed}        & \var{table_1_test_2_failed}    & \var{table_1_test_3_failed}    & \var{table_1_test_4_failed}    & \var{table_1_test_5_failed}    & \var{table_1_test_6_failed}    & \var{table_1_test_7_failed}    & \var{table_1_test_8_failed}    & \var{table_1_test_9_failed}    & \var{table_1_test_10_failed}    & \var{table_1_test_11_failed}    & \var{table_1_test_12_failed}    & \var{table_1_test_13_failed}    & \var{table_1_test_14_failed}    & \var{table_1_test_15_failed}    & \var{table_1_test_16_failed}    & \var{table_1_test_17_failed}    & \var{table_1_test_18_failed}
                \\
                \trowcolor \trowfont \# No result                    & \var{table_1_test_1_no_result}     & \var{table_1_test_2_no_result} & \var{table_1_test_3_no_result} & \var{table_1_test_4_no_result} & \var{table_1_test_5_no_result} & \var{table_1_test_6_no_result} & \var{table_1_test_7_no_result} & \var{table_1_test_8_no_result} & \var{table_1_test_9_no_result} & \var{table_1_test_10_no_result} & \var{table_1_test_11_no_result} & \var{table_1_test_12_no_result} & \var{table_1_test_13_no_result} & \var{table_1_test_14_no_result} & \var{table_1_test_15_no_result} & \var{table_1_test_16_no_result} & \var{table_1_test_17_no_result} & \var{table_1_test_18_no_result}
                \\
                \bottommrule
            \end{tabular}
        \end{adjustbox}
    \end{table*}
\end{center}

\begin{table}
    \begin{center}
        \caption{\textbf{Top 20:} Out of 468 FI methods, these 20 methods pass the 18 tests given in \Cref{feature importance: appendix: tests} the most often. These tests are designed to examine if an FI method adheres to the same properties as the \bpfi{} , given in \Cref{feature importance: sec: Properties of BP-FI}. \emph{Passed} means that the datasets from \Cref{feature importance: appendix: datasets} do not give a counterexample. Certainly, this does not mean that the FI method is proven to actually have this property. \emph{Failed} means that a counterexample was found. \emph{No result} indicates that the test could not be executed, because the running time of the FI method was too long or an error occurred.}
        \label{feature importance: tab: best 20}
        \begin{adjustbox}{width=0.6\linewidth,keepaspectratio}
            \def\x{3}

            \begin{tabular}{RE{4 cm}*{\x}{C}} \topprule
                \headrowcolor \tablerowfontcolor                                &                                          & \multicolumn{\x}{C}{\textbf{ Combined result:}}                                                                                              \\
                \headrowcolor \multicolumn{2}{L}{\headfont \textbf{Method}}     & \subheadfont \# Passed                   & \subheadfont  \# Failed                         & \subheadfont \# No result                                                                  \\ \middrule
                \trowcolor \trowfont \bfseries \var{table_best_method_1_index}. & \bfseries \var{table_best_method_1_name} & \bfseries \var{table_best_method_1_passed}      & \bfseries \var{table_best_method_1_failed} & \bfseries \var{table_best_method_1_no_result} \\
                \trowcolor \trowfont \var{table_best_method_2_index}.           & \var{table_best_method_2_name}           & \var{table_best_method_2_passed}                & \var{table_best_method_2_failed}           & \var{table_best_method_2_no_result}           \\
                \trowcolor \trowfont \var{table_best_method_3_index}.           & \var{table_best_method_3_name}           & \var{table_best_method_3_passed}                & \var{table_best_method_3_failed}           & \var{table_best_method_3_no_result}           \\
                \trowcolor \trowfont \var{table_best_method_4_index}.           & \var{table_best_method_4_name}           & \var{table_best_method_4_passed}                & \var{table_best_method_4_failed}           & \var{table_best_method_4_no_result}           \\
                \trowcolor \trowfont \var{table_best_method_5_index}.           & \var{table_best_method_5_name}           & \var{table_best_method_5_passed}                & \var{table_best_method_5_failed}           & \var{table_best_method_5_no_result}           \\
                \trowcolor \trowfont \var{table_best_method_6_index}.           & \var{table_best_method_6_name}           & \var{table_best_method_6_passed}                & \var{table_best_method_6_failed}           & \var{table_best_method_6_no_result}           \\
                \trowcolor \trowfont \var{table_best_method_7_index}.           & \var{table_best_method_7_name}           & \var{table_best_method_7_passed}                & \var{table_best_method_7_failed}           & \var{table_best_method_7_no_result}           \\
                \trowcolor \trowfont \var{table_best_method_8_index}.           & \var{table_best_method_8_name}           & \var{table_best_method_8_passed}                & \var{table_best_method_8_failed}           & \var{table_best_method_8_no_result}           \\
                \trowcolor \trowfont \var{table_best_method_9_index}.           & \var{table_best_method_9_name}           & \var{table_best_method_9_passed}                & \var{table_best_method_9_failed}           & \var{table_best_method_9_no_result}           \\
                \trowcolor \trowfont \var{table_best_method_10_index}.          & \var{table_best_method_10_name}          & \var{table_best_method_10_passed}               & \var{table_best_method_10_failed}          & \var{table_best_method_10_no_result}          \\
                \trowcolor \trowfont \var{table_best_method_11_index}.          & \var{table_best_method_11_name}          & \var{table_best_method_11_passed}               & \var{table_best_method_11_failed}          & \var{table_best_method_11_no_result}          \\
                \trowcolor \trowfont \var{table_best_method_12_index}.          & \var{table_best_method_12_name}          & \var{table_best_method_12_passed}               & \var{table_best_method_12_failed}          & \var{table_best_method_12_no_result}          \\
                \trowcolor \trowfont \var{table_best_method_13_index}.          & \var{table_best_method_13_name}          & \var{table_best_method_13_passed}               & \var{table_best_method_13_failed}          & \var{table_best_method_13_no_result}          \\
                \trowcolor \trowfont \var{table_best_method_14_index}.          & \var{table_best_method_14_name}          & \var{table_best_method_14_passed}               & \var{table_best_method_14_failed}          & \var{table_best_method_14_no_result}          \\
                \trowcolor \trowfont \var{table_best_method_15_index}.          & \var{table_best_method_15_name}          & \var{table_best_method_15_passed}               & \var{table_best_method_15_failed}          & \var{table_best_method_15_no_result}          \\
                \trowcolor \trowfont \var{table_best_method_16_index}.          & \var{table_best_method_16_name}          & \var{table_best_method_16_passed}               & \var{table_best_method_16_failed}          & \var{table_best_method_16_no_result}          \\
                \trowcolor \trowfont \var{table_best_method_17_index}.          & \var{table_best_method_17_name}          & \var{table_best_method_17_passed}               & \var{table_best_method_17_failed}          & \var{table_best_method_17_no_result}          \\
                \trowcolor \trowfont \var{table_best_method_18_index}.          & \var{table_best_method_18_name}          & \var{table_best_method_18_passed}               & \var{table_best_method_18_failed}          & \var{table_best_method_18_no_result}          \\
                \trowcolor \trowfont \var{table_best_method_19_index}.          & \var{table_best_method_19_name}          & \var{table_best_method_19_passed}               & \var{table_best_method_19_failed}          & \var{table_best_method_19_no_result}          \\
                \trowcolor \trowfont \var{table_best_method_20_index}.          & \var{table_best_method_20_name}          & \var{table_best_method_20_passed}               & \var{table_best_method_20_failed}          & \var{table_best_method_20_no_result}          \\
                \bottommrule
            \end{tabular}
        \end{adjustbox}
    \end{center}
\end{table}

\begin{figure*}
    \resizebox*{\linewidth}{!}{
        \begin{tikzpicture}
            \def\maxx{14};
            \def\maxy{6};
            \def\ystep{19};
            \def\xstep{1.4};
            \def\barwidth{0.357};

            \draw[fill = backgroundgray, draw = boxcolor] (0,0) rectangle (\maxx, \maxy);

            \foreach \y in {0,20,...,110} {
                    \node[anchor = east, labelcolor] at (-0.3,\y / \ystep){\y};
                    \draw[tickcolor] (-0.3, \y / \ystep) -- (0, \y / \ystep);
                }

            \foreach \y in {20,40,...,110} {
                    \draw[dashed, backgrounddash] (0, \y / \ystep) -- (\maxx, \y / \ystep);
                }

            \foreach[evaluate = {\j = int(\x - 1)}] \x in {1,2,...,19} {
                    \node[labelcolor] at (\x / \xstep , -0.6){\j};
                    \draw[labelcolor] (\x / \xstep, -0.3) -- (\x / \xstep, 0);
                }

            \node[labelcolor] at (\maxx / 2, -1.2) {\bf \small \# Passed tests};
            \node[rotate = 90, labelcolor] at (-1.5, \maxy / 2) {\bf \small Frequency};

            \node[labelcolor] (bpfi) at (17.7 / \xstep, 25 / \ystep){\small \bfseries \bpfi{}};
            \draw[->, labelcolor] (bpfi) -- (18.7 / \xstep, 2 / \ystep);

            \foreach[count = \i, evaluate = {\x = int(20-\i)}] \y in {1,0,0,0,0,0,0,5,13,25,74,78,78,101,57,24,7,4,1
                }{

                    \draw[fill = barcolor3, draw = outlinecolor2] (\x / \xstep - \barwidth, 0) -- (\x / \xstep - \barwidth, \y / \ystep) -- (\x / \xstep + \barwidth, \y / \ystep) -- (\x / \xstep + \barwidth, 0);
                    \node[above, labelcolor] at (\x / \xstep, \y / \ystep){\small \y};
                }

        \end{tikzpicture}
    }
    \caption{\textbf{Frequency of total passed test:} Histogram of the number of passed tests (out of 18) for the 468 FI methods.} \label{feature importance: fig: histogram passed test}

\end{figure*}
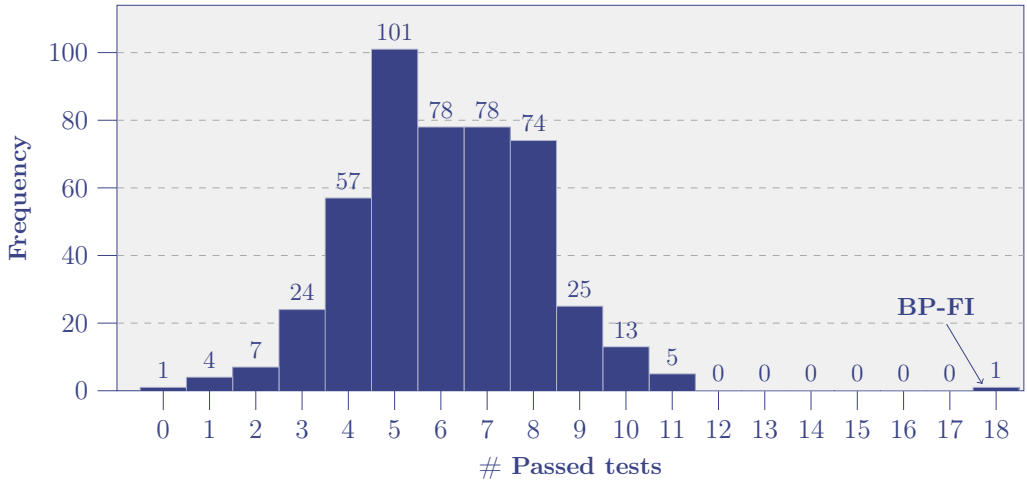

\paragraph*{Test 1}
In this test, it is evaluated if the sum of FI values is the same as the sum for \bpfi{}. At first, this seems a rather strict requirement. However, it holds for all datasets that were used that $\Dep{\FISET}{Y}$ is either zero or one. Thus, we essentially evaluate if the sum of FI is equal to one, when all variables collectively fully determine $Y$ and zero if all variables are null-independent. The tests show that no FI method is able to pass this test, except for the \bpfi{}. To highlight some of the methods that came close: {\slshape 162. Rebelosa Classifier RF, 2. Random Forest Classifier entropy, 2. Random Forest Classifier gini} only fail for the datasets where the sum should be zero (because of null-independence) and {\slshape 1. AdaBoost Classifier} only does not pass on three of the four datasets based on the XOR function (see \Cref{feature importance: appendix: datasets}), where the sum should be one, but was zero instead. FI method {\slshape 51. lssvmRadial} came closest with two fails. For the null-independent datasets (\Cref{feature importance: dataset: 6,feature importance: dataset: 7}), it gives each feature an FI of 0.5, making the sum larger than zero.

\paragraph*{Test 2}
In \Cref{feature importance: fig: unstable sum FI}, a breakdown is given of where the sum of the FI values is unstable. The most errors are made with the \emph{Binary system} datasets, when a fully informative feature is added. In total, \var{Test_2_passed_test} methods passed the test, whereas \var{Test_2_failed_test} failed. From these \var{Test_2_failed_test} methods, 279 fail with at least one increase of the sum, whereas 232 methods fail with at least one decrease. An alarming number of FI methods thus assign significantly more or less FI when a variable is added that does not contain any additional information. More or less credit is given out, whilst the collective knowledge is stable and does not warrant an increase or decrease in credit. Additionally, when the initial and final sum both contain a NaN value, it is considered as a pass. Three out of \var{Test_2_passed_test} would have not passed without this rule. If only the initial or the final sum contained NaN, it is considered a fail, because the sum is not the same. Only five methods fail solely by this rule: {\slshape 15. Fisher Score},
{\slshape 11. f classif}, {\slshape 178. anova}, {\slshape 179. laplacian score} and {\slshape 192. NDFS}.

\begin{figure*}
    \resizebox*{\linewidth}{!}{
        \begin{tikzpicture}
            \def\maxx{14};
            \def\maxy{6};
            \def\ystep{58};
            \def\xstep{0.857};
            \def\barhelp{1/\xstep}
            \def\barwidth{\barhelp / 2};

            \draw[fill = backgroundgray, draw = boxcolor] (0,0) rectangle (\maxx, \maxy);

            \foreach \y in {0,50,...,300} {
                    \node[anchor = east, labelcolor] at (-0.3,\y / \ystep){\y};
                    \draw[tickcolor] (-0.3, \y / \ystep) -- (0, \y / \ystep);
                }
            \foreach \y in {50,100,...,300} {
                    \draw[dashed, backgrounddash] (0, \y / \ystep) -- (\maxx, \y / \ystep);
                }

            \foreach \x / \l in {1 / 1\basemark 2, 2 / 1\basemark 3, 3 / 1\basemark 4, 4 / 1\basemark 5, 5 / 6\basemark 7, 6 / 8\basemark 9, 7 / 8\basemark 10, 8 / 11\basemark 12, 9 / 11\basemark 13, 10 / 14\basemark 16, 11 / 14\basemark 17} {
                    \node[labelcolor] at (\x / \xstep , -0.6){\l};
                    \draw[tickcolor] (\x / \xstep, -0.3) -- (\x / \xstep, 0);
                }

            \node[labelcolor] at (\maxx / 2, -1.2) {\bf \small Compared datasets};
            \node[rotate = 90, labelcolor] at (-1.5, \maxy / 2) {\bf \small \# Unstable sum FI};

            \foreach[count = \x] \y in {188, 302, 311, 299, 163, 190, 95, 203, 194, 124, 117
                }{

                    \draw[fill = barcolor3, draw = outlinecolor2] (\x / \xstep - \barwidth, 0) -- (\x / \xstep - \barwidth, \y / \ystep) -- (\x / \xstep + \barwidth, \y / \ystep) -- (\x / \xstep + \barwidth, 0);
                    \node[above, labelcolor] at (\x / \xstep, \y / \ystep){\small \y};
                }

        \end{tikzpicture}
    }
    \caption{\textbf{Unstable sum FI:} Whenever a variable is added that does not give any additional information, the sum of all FI should remain stable. For each comparison, we determine how often this is not the case out of 468 FI methods.} \label{feature importance: fig: unstable sum FI}

\end{figure*}
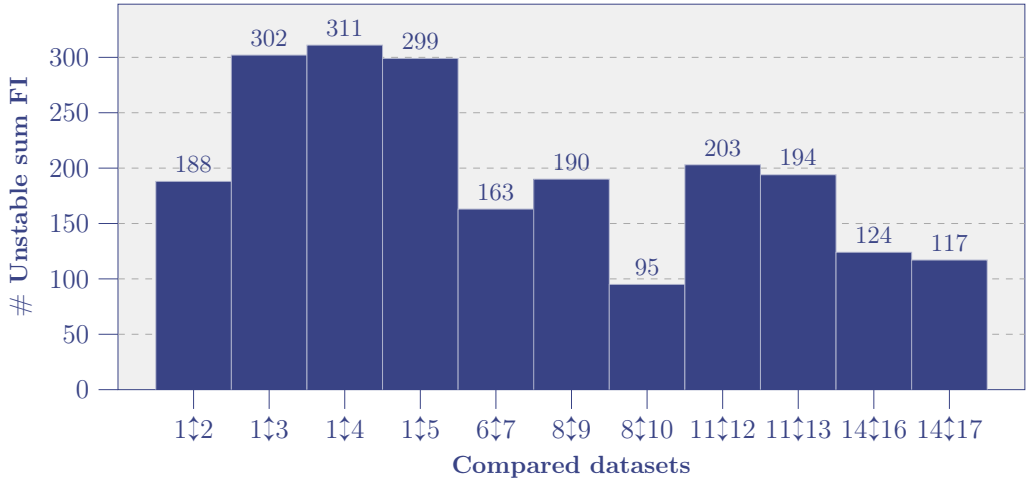

\paragraph*{Test 11}
\Cref{feature importance: fig: argmax datasets} shows how often each variable is within an $\epsilon$-bound of the largest FI in the dataset. Fully informative variables should attain the largest FI, according to \Cref{feature importance: prop: max Fi when fully determined}. In total, we observe that the fully informative variables are often the largest FI with respect to the other variables. However, there still remain many cases where they are not. \var{Test_11_failed_test} FI methods fail this test, thus definitively not having \Cref{feature importance: prop: max Fi when fully determined}. This makes interpretation difficult, when a variable can get more FI than a variable which fully determines the target variable. What does it mean, when a variable is more important than a variable that gives perfect information?

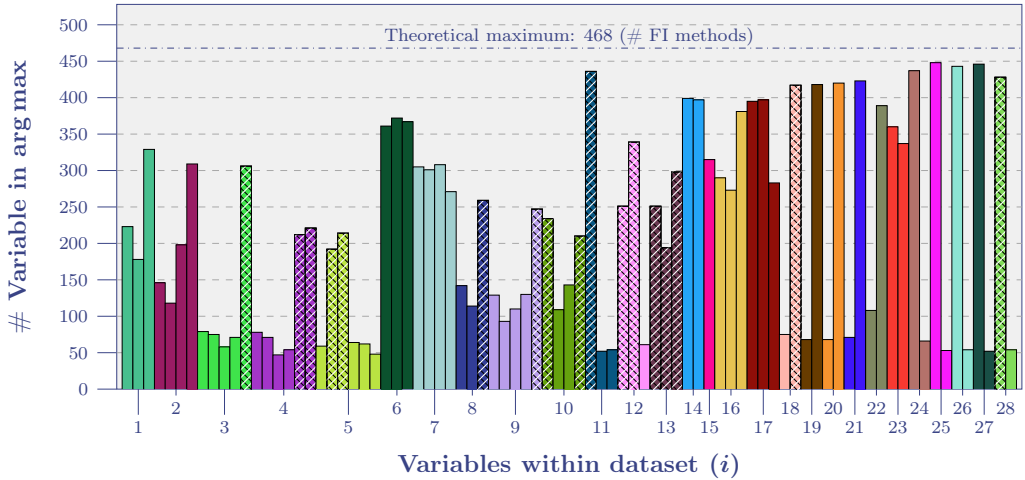
\begin{figure*}
    \resizebox*{\linewidth}{!}{
        \begin{tikzpicture}
            \def\maxx{14};
            \def\maxy{6};
            \def\ystep{88};
            \def\xstep{6};
            \def\barhelp{1/\xstep}
            \def\barwidth{\barhelp / 2};

            \draw[fill = backgroundgray, draw = boxcolor] (0,0) rectangle (\maxx, \maxy);

            \node[labelcolor] at (\maxx / 2, -1.2) {\bf \small Variables within dataset ($\bm{i}$)};
            \node[rotate = 90, labelcolor] at (-1.5, \maxy / 2) {\bf \small \# Variable in $\bm{\argmax}$};

            \foreach \y in {0,50,...,528} {
                    \node[anchor = east, labelcolor] at (-0.3,\y / \ystep){\scriptsize \y};
                    \draw[tickcolor] (-0.3, \y / \ystep) -- (0, \y / \ystep);

                }

            \foreach \y in {50, 100,...,528} {
                    \draw[dashed, backgrounddash] (0, \y / \ystep) -- (\maxx, \y / \ystep);
                }
            \draw[dashdotted, backgrounddash, scheme4] (0, 468 / \ystep) -- (\maxx, 468 / \ystep);
            \node[above = -0.1 cm, scheme4] at (\maxx / 2, 468 / \ystep) {\scriptsize Theoretical maximum: 468 (\# FI methods)};

            \foreach[count = \j, evaluate={
                        \NA=int(mod(\j,2));
                    }] \x/\i in {
                    2/1,
                    5.5/2,
                    10/3,
                    15.5/4,
                    21.5/5,
                    26/6,
                    29.5/7,
                    33/8,
                    37/9,
                    41.5/10,
                    45/11,
                    48/12,
                    51/13,
                    53.5/14,
                    55/15,
                    57/16,
                    60/17,
                    62.5/18,
                    64.5/19,
                    66.5/20,
                    68.5/21,
                    70.5/22,
                    72.5/23,
                    74.5/24,
                    76.5/25,
                    78.5/26,
                    80.5/27,
                    82.5/28
                }{

                    \ifthenelse{\NA = 0}{
                        \node[labelcolor] at (\x / \xstep, -0.3){\scriptsize \i};
                        \draw[tickcolor] (\x / \xstep, -0.1) -- (\x / \xstep, 0);
                    }
                    {
                        \node[labelcolor] at (\x / \xstep, -0.6){\scriptsize \i};
                        \draw[tickcolor] (\x / \xstep, -0.4) -- (\x / \xstep, 0);
                    }
                }

            \foreach[count = \x] \y / \colora / \colorb / \colorc in {
                    223 / 72 / 191 / 142,
                    178 / 72 / 191 / 142,
                    329 / 72 / 191 / 142,
                    146 / 153 / 28 / 100,
                    118 / 153 / 28 / 100,
                    198 / 153 / 28 / 100,
                    309 / 153 / 28 / 100,
                    79 / 63 / 227 / 75,
                    75 / 63 / 227 / 75,
                    58 / 63 / 227 / 75,
                    71 / 63 / 227 / 75,
                    306 / 63 / 227 / 75,
                    78 / 163 / 53 / 200,
                    71 / 163 / 53 / 200,
                    47 / 163 / 53 / 200,
                    54 / 163 / 53 / 200,
                    212 / 163 / 53 / 200,
                    221 / 163 / 53 / 200,
                    59 / 186 / 227 / 66,
                    192 / 186 / 227 / 66,
                    214 / 186 / 227 / 66,
                    64 / 186 / 227 / 66,
                    62 / 186 / 227 / 66,
                    48 / 186 / 227 / 66,
                    361 / 11 / 82 / 46,
                    372 / 11 / 82 / 46,
                    367 / 11 / 82 / 46,
                    305 / 161 / 207 / 207,
                    301 / 161 / 207 / 207,
                    308 / 161 / 207 / 207,
                    271 / 161 / 207 / 207,
                    142 / 50 / 61 / 150,
                    114 / 50 / 61 / 150,
                    259 / 50 / 61 / 150,
                    129 / 185 / 158 / 235,
                    93 / 185 / 158 / 235,
                    110 / 185 / 158 / 235,
                    130 / 185 / 158 / 235,
                    247 / 185 / 158 / 235,
                    234 / 101 / 161 / 14,
                    109 / 101 / 161 / 14,
                    143 / 101 / 161 / 14,
                    210 / 101 / 161 / 14,
                    436 / 8 / 87 / 130,
                    52 / 8 / 87 / 130,
                    54 / 8 / 87 / 130,
                    251 / 253 / 146 / 250,
                    339 / 253 / 146 / 250,
                    61 / 253 / 146 / 250,
                    251 / 104 / 55 / 79,
                    194 / 104 / 55 / 79,
                    298 / 104 / 55 / 79,
                    399 / 36 / 165 / 247,
                    397 / 36 / 165 / 247,
                    315 / 251 / 9 / 152,
                    290 / 230 / 195 / 82,
                    273 / 230 / 195 / 82,
                    381 / 230 / 195 / 82,
                    395 / 144 / 14 / 8,
                    397 / 144 / 14 / 8,
                    283 / 144 / 14 / 8,
                    75 / 253 / 181 / 172,
                    417 / 253 / 181 / 172,
                    68 / 104 / 60 / 0,
                    418 / 104 / 60 / 0,
                    68 / 246 / 147 / 46,
                    420 / 246 / 147 / 46,
                    71 / 63 / 22 / 249,
                    423 / 63 / 22 / 249,
                    108 / 127 / 136 / 97,
                    389 / 127 / 136 / 97,
                    360 / 247 / 57 / 49,
                    337 / 247 / 57 / 49,
                    437 / 180 / 114 / 107,
                    66 / 180 / 114 / 107,
                    448 / 250 / 27 / 252,
                    53 / 250 / 27 / 252,
                    443 / 141 / 228 / 211,
                    54 / 141 / 228 / 211,
                    446 / 25 / 79 / 70,
                    52 / 25 / 79 / 70,
                    428 / 130 / 220 / 89,
                    54 / 130 / 220 / 89
                }{
                    \definecolor{colorss}{RGB}{\colora, \colorb, \colorc}
                    \ifthenelse{\x = 12 \OR \x = 17 \OR \x = 18 \OR \x = 20 \OR \x = 21 \OR \x = 34 \OR \x = 39 \OR \x = 40 \OR \x = 43 \OR \x = 44 \OR \x = 47 \OR \x = 48 \OR \x = 50 \OR \x = 51 \OR \x = 52 \OR \x = 63 \OR \x = 82 }{
                        \draw[preaction = {fill = colorss, draw = black}, pattern = {north west lines[distance = 0.3pt]}, pattern color = black] (\x / \xstep - \barwidth, 0) -- (\x / \xstep - \barwidth, \y / \ystep) -- (\x / \xstep + \barwidth, \y / \ystep) -- (\x / \xstep + \barwidth, 0);
                        \draw[pattern = {north east lines[distance = 0.3pt, xshift = 0.15pt]}, pattern color = white] (\x / \xstep - \barwidth, 0) -- (\x / \xstep - \barwidth, \y / \ystep) -- (\x / \xstep + \barwidth, \y / \ystep) -- (\x / \xstep + \barwidth, 0);

                    }{
                        \draw[preaction = {fill = colorss}, draw = black] (\x / \xstep - \barwidth, 0) -- (\x / \xstep - \barwidth, \y / \ystep) -- (\x / \xstep + \barwidth, \y / \ystep) -- (\x / \xstep + \barwidth, 0);
                    }
                }

        \end{tikzpicture}
    }
    \caption{\textbf{Argmax FI:} For each variable in every dataset, we determine how often it receives the largest FI (within an $\epsilon$-bound for $\epsilon = 0.01$) with respect to the other variables in the dataset. Fully informative variables should attain the largest FI (see \Cref{feature importance: prop: max Fi when fully determined}). All fully informative variables are shaded in the figure.} \label{feature importance: fig: argmax datasets}

\end{figure*}

\paragraph*{Test 10, 17, 18}
These tests all evaluate if the FI method assigns a specific value to a feature. From \Cref{feature importance: tab: summary passed}, we observe that not many methods are able to pass these tests. This is not surprising, as they have not been thoroughly tested yet to give a specific value. This is one of the important contributions of this research, which is why we want to elaborate on the attempts that have been made in previous research. A lot of synthetic datasets for FI have been proposed \cite{Abe2005,Zien2009,Altmann2010,Stijven2011,Anh2012,Owen2016,Song2016,Shin2017,Carletti2017,Aas2019,Williamson2020,Lundberg2020,Giles2022,Onig2021,Hooker2018,Khodadadian2021,Lundberg2018,Lu2018,Frye2019,Merrick2019,Sundararajan2019,Dhamdhere2020,Tonekaboni2020,Casalicchio2018,Li2019,Molnar2021,Johnsen2021,Zhou2020,Fryer2021}, but no specific desirable FI values were given. Most commonly, synthetic datasets are generated to evaluate the ability of an FI method to find \emph{noisy} features \cite{Altmann2010,Stijven2011,Anh2012,Shin2017,Carletti2017,Williamson2020,Giles2022,Hooker2018,Sundararajan2019,Li2019,Johnsen2021,Zhou2020}. The common general concept of such a dataset is that the target variable is \emph{independent} of certain variables. The FI values are commonly evaluated by comparing the FI values of independent variables with dependent variables with the goal to establish if the FI method is able to find independent variables. If the FI method actually predicts the exact desirable FI is not considered. Next, we highlight the papers where some comment about the desired FI is made. Lundberg et al.~\cite{Lundberg2018} give two similar datasets, where one variable \emph{increases} in importance. They evaluate multiple FI methods to see if the same behavior is reflected in the outcome of these methods. This shows that some commonly used methods could assign lower importance to a variable, when it should actually be increasing. Giles et al.~\cite{Giles2022} also design multiple artificial datasets to represent different scenarios, where comments are made about which variables should obtain more FI. Sundararajan et al.~\cite{Sundararajan2019} remark that if every feature value is \emph{unique}, that all variables get \emph{equal} attributions for an FI method (CES) even if the function is not symmetric in the variables. If a tiny amount of noise is added to each feature, all features would get identical attributions. However, no assessment is done on the validity of this outcome. Owen et al.~\cite{Owen2016} give the following example. Let $f(x_1,x_2) = 10^6x_1 + x_2$ with $x_1 = 10^6x_2$, where they argue that, despite the larger variance of $x_1$, both variables are equally important, as the function can be written as a function of $x_1$ alone, but also only as a function of $x_2$. Although we have previously seen that `written as a function of' is not a good criterion (due to dependencies), we agree with the authors that the FI should be equal. Another example is given by Owen et al.~\cite{Owen2016}, where $\PP(x_1=0,x_2=0,y = y_0) = p_0$, $\PP(x_1=1,x_2=0,y = y_1) = p_1$, and $\PP(x_1=0,x_2=1,y = y_2) = p_2$ are the possible outcomes. If $p_0=0$, it is stated in \cite{Owen2016} that the Shapley relative importance of $x_1$ is $\frac{1}{2}$, which is \enquote{what it must be because there is then a bijection between $x_1$ and $x_2$}. This is an interesting observation, as most papers do not comment about the validity of an outcome. Additionally, when $y_1=y_2$ (and $y_0\neq y_1$), Owen et al.~\cite{Owen2016} argue that the most important variable, is the one with the largest variance. Fryer et al.~\cite{Fryer2021} also create a binary XOR dataset (see \Cref{feature importance: dataset: 14}). They evaluate seven FI methods for this specific dataset. The role of $X_1$ and $X_2$ is symmetric, thus the assigned FI should also be identical. It is shown that six out of seven methods do indeed give a symmetrical result. However, the exact FI value varies greatly. \emph{SHAP} gives FI of $3.19$, whereas \emph{Shapley DC} assigns $0.265$ as FI. Only symmetry is checked, not the accuracy of the FI method. In conclusion, existing research was not focussed on predicting the exact accurate FI values. It is therefore not surprising that FI methods fail these accuracy tests so often. \Cref{feature importance: tab: wrong specific value} outlines in more detail how often the variables are assigned an FI value outside an $\epsilon$-bound (with $\epsilon = 0.01$) of the desired outcome. With \Cref{feature importance: dataset: 11}, the FI methods mostly struggle with assigning 1 to the fully informative variable. In total, \var{Test_10_failed_test} methods failed \Cref{feature importance: test: 10}. For \Cref{feature importance: dataset: 14,feature importance: dataset: 17}, the two XOR variables fail about as often. Comparing these two datasets, it is interesting to note that the XOR variables fail more often, when a null-independent variable is added. In total, \var{Test_17_failed_test} methods failed \Cref{feature importance: test: 17}. \Cref{feature importance: test: 18} is hard, as the FI method should assign the correct values for all probability datasets (see \Cref{feature importance: appendix: datasets}). Only five methods are able to pass this test: {\slshape 152. mutualInformation}, {\slshape 153. roughsetConsistency}, {\slshape 162. RF}, {\slshape 175. fechner corr}, and {\slshape 202. \bpfi{}}. These five methods also pass \Cref{feature importance: test: 10}. However, besides \bpfi{}, there is only one method that also satisfies
\Cref{feature importance: test: 17}, which is {\slshape 162. RF}. The other three methods all assign only zeros for \Cref{feature importance: dataset: 14,feature importance: dataset: 17}, not identifying the value that the XOR variables hold, when their information is combined. In \Cref{feature importance: fig: breakdown probability datasets}, a breakdown is given for each probability dataset how often FI methods fail. An unexpected result, is that the dataset with probability $p<\frac{1}{2}$ and the dataset with probability $1-p$ do not fail as often. Consistently, $p<\frac{1}{2}$ fails less often than its counterpart $1-p$, although the datasets are the same up to a reordering of the features and the samples. This effect can also be seen in \Cref{feature importance: tab: wrong specific value}.

\begin{table}
    \begin{center}
        \caption{\textbf{Specific outcomes:} \Cref{feature importance: test: 10,feature importance: test: 17,feature importance: test: 18} all evaluate if an FI method gives a specific outcome for certain dataset. In this table, it is outlined how often each variable of these datasets is assigned a value outside an $\epsilon$-bound (with $\epsilon=0.01$) of the desired outcome.}
        \label{feature importance: tab: wrong specific value}
        \begin{adjustbox}{width=0.8\linewidth,keepaspectratio}
            \def\arraystretch{1.3}
            \begin{tabular}{LL*{6}{C}} \topprule
                \headcellcolor                                              & \headcellcolor                                                       & \multicolumn{6}{C}{\headcellcolor \headfont \textbf{\# Non desirable outcome}}                                                                           \\
                \headcellcolor                                              & \headcellcolor                                                       & \multicolumn{3}{C}{\subheadcellcolor \subheadfont not NaN}                     & \multicolumn{3}{C}{\subheadcellcolor \subheadfont NaN}                  \\
                \multirow{-3}{*}{\headcellcolor \headfont \textbf{Dataset}} & \multirow{-3}{*}{\headcellcolor \tchead{\headfont \textbf{Desirable}                                                                                                                                                            \\ {\headfont \textbf{outcome}}}} & \subheadcellcolor \subheadfont  $X_1$ & \subheadcellcolor \subheadfont $X_2$ & \subheadcellcolor \subheadfont $X_3$ & \subheadcellcolor \subheadfont $X_1$ & \subheadcellcolor \subheadfont $X_2$ & \subheadcellcolor \subheadfont $X_3$ \\ \middrule
                \trowcolor \trowfont \ref{feature importance: dataset: 11}  & $(1,0,0)$                                                            & 360                                                                            & 89                                                     & 88 & 4 & 4 & 4 \\
                \trowcolor \trowfont \ref{feature importance: dataset: 14}  & $(\frac{1}{2},\frac{1}{2})$                                          & 353                                                                            & 351                                                    & -  & 5 & 5 & - \\
                \trowcolor \trowfont \ref{feature importance: dataset: 17}  & $(\frac{1}{2},\frac{1}{2},0)$                                        & 369                                                                            & 364                                                    & 90 & 5 & 5 & 5 \\
                \trowcolor \trowfont \ref{feature importance: dataset: 18}  & $(0,1)$                                                              & 82                                                                             & 352                                                    & -  & 4 & 4 & - \\
                \trowcolor \trowfont \ref{feature importance: dataset: 19}  & $(\frac{1}{10},\frac{9}{10})$                                        & 412                                                                            & 434                                                    & -  & 3 & 3 & - \\
                \trowcolor \trowfont \ref{feature importance: dataset: 20}  & $(\frac{2}{10},\frac{8}{10})$                                        & 434                                                                            & 438                                                    & -  & 3 & 3 & - \\
                \trowcolor \trowfont \ref{feature importance: dataset: 21}  & $(\frac{3}{10},\frac{7}{10})$                                        & 435                                                                            & 441                                                    & -  & 3 & 3 & - \\
                \trowcolor \trowfont \ref{feature importance: dataset: 22}  & $(\frac{4}{10},\frac{6}{10})$                                        & 439                                                                            & 436                                                    & -  & 3 & 3 & - \\
                \trowcolor \trowfont \ref{feature importance: dataset: 23}  & $(\frac{5}{10},\frac{5}{10})$                                        & 423                                                                            & 422                                                    & -  & 3 & 3 & - \\
                \trowcolor \trowfont \ref{feature importance: dataset: 24}  & $(\frac{6}{10},\frac{4}{10})$                                        & 448                                                                            & 447                                                    & -  & 3 & 3 & - \\
                \trowcolor \trowfont \ref{feature importance: dataset: 25}  & $(\frac{7}{10},\frac{3}{10})$                                        & 449                                                                            & 446                                                    & -  & 3 & 3 & - \\
                \trowcolor \trowfont \ref{feature importance: dataset: 26}  & $(\frac{8}{10},\frac{2}{10})$                                        & 446                                                                            & 444                                                    & -  & 3 & 3 & - \\
                \trowcolor \trowfont \ref{feature importance: dataset: 27}  & $(\frac{9}{10},\frac{1}{10})$                                        & 444                                                                            & 435                                                    & -  & 3 & 3 & - \\
                \trowcolor \trowfont \ref{feature importance: dataset: 28}  & $(1,0)$                                                              & 352                                                                            & 86                                                     & -  & 5 & 5 & - \\
                \bottommrule
            \end{tabular}
        \end{adjustbox}
    \end{center}
\end{table}

\begin{figure*}
    \resizebox*{\linewidth}{!}{
        \begin{tikzpicture}
            \def\maxx{14};
            \def\maxy{6};
            \def\ystep{41};
            \def\xstep{0.86};
            \def\barwidth{0.581};

            \draw[decoration={discontinuity,amplitude=1ex},decorate, fill = backgroundgray, draw = boxcolor] (0,0) -- (0, 50 / \ystep)-- (0,\maxy)-- (\maxx, \maxy) -- (\maxx, 0) -- cycle;

            \node[anchor = east, labelcolor] at (-0.3,0 / \ystep){0};
            \draw[tickcolor] (-0.3, 0 / \ystep) -- (0, 0 / \ystep);
            \node[anchor = east, labelcolor] at (-0.3,50 / \ystep){300};
            \draw[tickcolor] (-0.3, 50 / \ystep) -- (0, 50 / \ystep);
            \node[anchor = east, labelcolor] at (-0.3,100 / \ystep){350};
            \draw[tickcolor] (-0.3, 100 / \ystep) -- (0, 100 / \ystep);
            \node[anchor = east, labelcolor] at (-0.3, 150 / \ystep){400};
            \draw[tickcolor] (-0.3, 150 / \ystep) -- (0, 150 / \ystep);
            \node[anchor = east, labelcolor] at (-0.3, 200 / \ystep){450};
            \draw[tickcolor] (-0.3, 200 / \ystep) -- (0, 200 / \ystep);

            \foreach \y in {50, 100, ..., 200}{
                    \draw[dashed, backgrounddash] (0, \y / \ystep) -- (\maxx, \y / \ystep);
                }

            \foreach[evaluate = {\x = int(10 * \j) + 1}] \j in {0.0, 0.1, ..., 1.1} {
                    \node[labelcolor] at (\x / \xstep , -0.6){\pgfmathprintnumber{\j}};
                    \draw[tickcolor] (\x / \xstep, -0.3) -- (\x / \xstep, 0);
                }

            \node[labelcolor] at (\maxx / 2, -1.2) {\bf \small Probability dataset $(p)$};
            \node[rotate = 90, labelcolor] at (-1.5, \maxy / 2) {\bf \small Frequency failed};

            \foreach[count = \x, evaluate = {\y = \j - 250}] \j in {399, 438, 441, 445, 442, 427, 451, 453, 450, 450, 405}{

                    \draw[fill = barcolor3, draw = outlinecolor2] (\x / \xstep - \barwidth, 0) -- (\x / \xstep - \barwidth, \y / \ystep) -- (\x / \xstep + \barwidth, \y / \ystep) -- (\x / \xstep + \barwidth, 0);
                    \node[above, labelcolor] at (\x / \xstep, \y / \ystep){\small \j};
                }

        \end{tikzpicture}
    }
    \caption{\textbf{Breakdown Test 18 per dataset:} In \Cref{feature importance: test: 18} an FI method needs to assign the correct FI values for every probability dataset (see \Cref{feature importance: appendix: datasets}). In this figure, we breakdown per dataset how often an FI method fails.} \label{feature importance: fig: breakdown probability datasets}

\end{figure*}
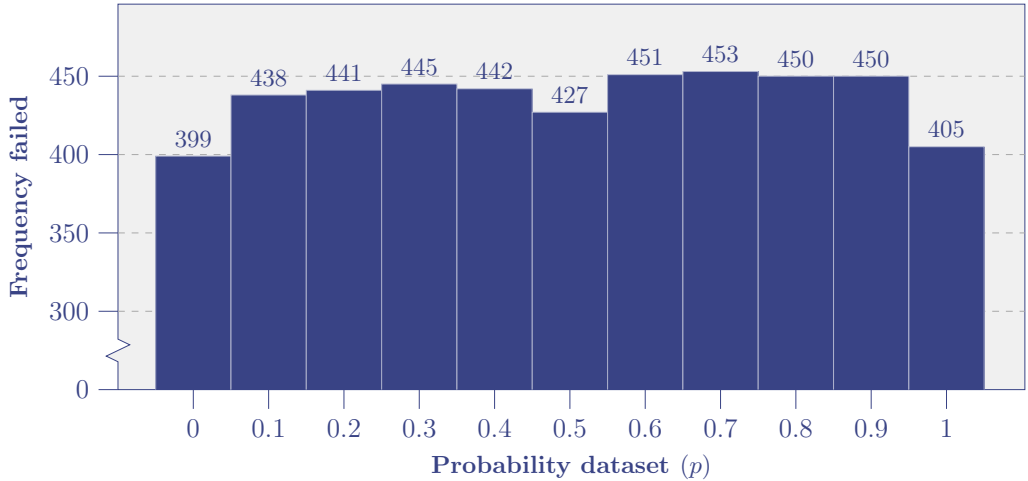

\paragraph*{No result}
Focussing on the \emph{no result} row of \Cref{feature importance: tab: summary passed}, there is one base method named \emph{158. KernelEstimator} in combination with \emph{Lasso} that in all cases did not work or exceeded running time. The large number of no results in \Cref{feature importance: test: 12} stem mostly from slow running times on the three datasets that are used in the test. At least 63 methods were too slow for each dataset, which automatically means that the test cannot be executed.

\section{Discussion and future research} \label{feature importance: sec: Discussion and future research}


Whilst it is recommended to use our new FI method, it is important to
understand the limitations and potential pitfalls. Below we elaborate on both the shortcomings of the approach proposed, and the related challenges for further research. We start by discussing by some matters that one needs to be aware of when applying the \bpfi{} (\Cref{feature importance: subsec: Creating awareness}). Next, we discuss some choices that were made for the experiments in \Cref{feature importance: subsec: Experimental design choices}. Finally, we elaborate on other possible research avenues in \Cref{feature importance: subsec: Additional matters}.

\subsection{Creating awareness} \label{feature importance: subsec: Creating awareness}

\paragraph*{Binning}
Berkelmans et al.~\cite{BP} explained that the way in which continuous data is discretized can have a considerable effect on the \bpdep, which is why all datasets that were used in our research are \emph{discrete}. If a feature has too many unique values (due to poor binning), it will receive a higher FI from \bpfi{}, as more information can be stored in the unique values (see \Cref{feature importance: prop: limiting outcome space}). On the other hand, when too few bins are chosen, an important feature can receive low FI, as the information is lost due to the binning. Future research should investigate and test which binning algorithms give the closest results to the underlying FI.

\paragraph*{Too few samples}
Consider the following dataset: $X_i, Y\sim\UU{\{0,1,\dots, 9\}}$ i.i.d. for $i\in \{1,\dots, 5\}$. Note that all features are null-independent, as $Y$ is just uniformly drawn without considering the features in any way. If $\nsamples = \infty$, the desired outcome would therefore be $\left (0,0,0,0,0 \right)$. However, when \emph{not enough} samples are given in the dataset, the features will get nonzero FI. Considering that the total number of different feature values is  $10^5$, combining all features \emph{does} actually give information about $Y$, when $\nsamples \ll 10^5$. For any possible combination of features, it is unlikely that it occurs more than once in the dataset. Therefore, knowing all feature values would (almost surely) determine the value of $Y$. \Cref{feature importance: prop: efficiency} gives that the sum of all FI should therefore be one. All feature variables are also \emph{symmetric} (\Cref{feature importance: prop: symmetry}), which is why the desired outcome is $(\frac{1}{5}, \frac{1}{5}, \frac{1}{5}, \frac{1}{5}, \frac{1}{5})$ instead. This example shows that one should be aware of the influence of the number of samples on the resulting FI. Variables that do \emph{not} influence $Y$ can still contain information, when not enough samples are provided. In this way, insufficient samples could lead to wrong conclusions, if one is not wary of this phenomenon.

\paragraph*{Counterintuitive dependency case}
The \emph{Berkelmans-Pries} dependency of $Y$ on $X$ measures how much probability mass of $Y$ is shifted by knowing $X$. However, two similar shifts in probability mass could lead to different predictive power. To explain this, we examine the following dataset. $X_1,X_2\sim\UU{\{0,1\}}$ with \begin{align*}
    \PP(Y=y \vert X_1 = x_1, X_2 = x_2) & = \left \{
    \begin{array}{ll}
        1/4 & \text{if } (x_2,y) = (0,0),        \\[1ex]
        3/4 & \text{if } (x_2,y) = (0,1),        \\[1ex]
        5/8 & \text{if } (x_1, x_2,y) = (0,1,0), \\[1ex]
        3/8 & \text{if } (x_1, x_2,y) = (0,1,1), \\[1ex]
        7/8 & \text{if } (x_1, x_2,y) = (1,1,0), \\[1ex]
        1/8 & \text{if } (x_1, x_2,y) = (1,1,1).
    \end{array}
    \right .
\end{align*}
Knowing the value of $X_2$ shifts the distribution of $Y$. Before, $Y$ was split 50/50, but when the value of $X_2$ is known, the labels are either split 25/75 or 75/25, depending on the value of $X_2$. Knowing $X_1$ gives even more information, as e.g., knowing $X_1 = X_2=1$ makes it more likely that $Y=0$. However, the shift in distribution of $Y$ is the same for knowing only $X_2$ and $X_1$ combined with $X_2$, which results in $\Dep{X_2}{Y} = \Dep{X_1\cup X_2}{Y}$. This is a counterintuitive result. Globally, knowing $X_2$ or $X_1\cup X_2$ gives the same shift in distribution, but locally we can predict $Y$ much better if we know $X_1$ as well. We are unsure how this effects the \bpfi{}. In this case, it follows that $\FI{X_1\cup X_2} > \FI{X_2}$, which is desirable. It is not unthinkable that a solution can be found to modify the dependency function in order to get a more intuitive result for such a case. Think e.g., of a different distance metric, that incorporates the local accuracy given the feature values or a conditional variant, which not only tests for independence, but also for conditional independence. These are all critical research paths that should be investigated.

\paragraph*{Using FI for feature selection}
\emph{Feature selection} (FS) is \enquote{the problem of choosing a small subset of features that ideally is necessary and sufficient to describe the target concept} \cite{Kira1992}. Basically, the objective is to find a subset of all features that gives the best performance for a given model, as larger feature sets could decrease the accuracy of a model \cite{Kursa2010}. Many FI methods actually stem from a FS procedure. However, it is important to stress that \emph{high} FI means that it should automatically be selected as feature. Shared knowledge with other features could render the feature less useful than expected. The other way around, \emph{low} FI features should not automatically be discarded. In combination with other features, it could still give some additional insights that other features are not able to provide.
Calculation of \bpfi{} values could also provide insight into which group of $K$ features $Y$ is most dependent on. To derive the result of \bpfi{}, all dependencies of $Y$ on a subset $S\subseteq \FISET$ are determined. If only $K$ variables are selected, it is natural to choose \begin{align*}
    S^*_K \in \argmax_{S \subseteq \FISET: |S| = K}\{\Dep{S}{Y}\}.
\end{align*}
These values are stored as an intermediate step in \bpfi{}, thus $S_K^*$ can be derived quickly thereafter.

\paragraph*{Larger outcome space leads to higher FI}
We have proven that a larger outcome space can never lead to a decrease in FI for \bpfi{}. This means, that features with more possible outcomes are more likely to attain a higher FI, depending on the distribution. There is a difference between a feature that has many possible outcomes that are \emph{almost never} attained, and a feature where many possible outcomes are \emph{regularly} observed. We do not find this property undesirable, as some articles suggest \cite{Zhou2020, Strobl2007}, as we would argue that a feature \emph{can} contain more information by storing the information in additional outcomes, which would lead to an non-decreasing FI.

\subsection{Experimental design choices} \label{feature importance: subsec: Experimental design choices}

\paragraph*{Regression}
To avoid binning issues, we only considered classification models and datasets. There are many more regression FI methods, that should be considered in a similar fashion. However, to draw clear and accurate conclusions, it is first necessary to understand how binning affects the results. Sometimes counterintuitive results can occur due to binning, that are not necessarily wrong. In such a case, it is crucial that the FI method is not depreciated.

\paragraph*{Runtime}
In the experiments, it could happen that an FI method had \emph{no result}, due to an excessive runtime or incompatible FI scores. The maximum runtime for each algorithm was set to one hour per dataset on an i7-12700K processor with 4 algorithms running simultaneously. The maximum runtime was necessary due to the sheer number of FI methods and datasets. Running four algorithms in parallel could unfairly penalize the runtime, as the processor is sometimes limited by other algorithms. In some occurrences, other parallel processes were already finished, which could potentially lower the runtime of an algorithm. There is a potential risk here, that accurate (but slow) FI methods are not showing up in the results. However, our synthetic datasets are relatively small with respect to the number of samples and the number of features, and we argue that one hour should be reasonable. Depending on the use case, sometimes a long time can be used to determine an FI value, whereas in other cases it could be essential to determine it rather quickly. Especially for larger datasets, it could even be unfeasible to run some FI methods. \bpfi{} uses Shapley values, which are exponentially harder to compute when the number of features grow. Approximation algorithms should be developed to faster estimate the true \bpfi{} outcome. Quick approximations could be useful if the runtime is much faster and the approximation is decent enough. Already, multiple papers have suggested approaches to approximate Shapley values faster \cite{Aas2019,Johnsen2021,Lipovetsky2001,Kononenko2014,Castro2009}. These approaches save time, but at what cost? A study could be done to find the best FI method given a dataset and an allowed running time.

\paragraph*{Stochasticity methods}
One factor we did not incorporate, is the \emph{stochasticity} of some FI methods. Some methods do not predict the same FI values, when it is repeatedly used. As example, {\slshape 79. rf} predicted for \Cref{feature importance: dataset: 3} (12.1, 11.7, 17.9, 15.2, 37.7) rounded to the first decimal. Running the method again gives a different result: (11.4, 12.0, 17.4, 15.6, 37.1), as this method uses a stochastic \emph{random forest}. In principle, it is \emph{undesirable} that an FI method is stochastic, as we believe that there should be a unique assignment of FI given a dataset. Due to the number of FI methods and datasets, we did not repeat and averaged each FI method. This would however give a better view on the performance of stochastic FI methods.

\paragraph*{Parameter tuning}
All FI methods were used with \emph{default} parameter values. Different parameter values could lead to more or less failed tests. However, the \emph{ideal} parameter setting is not known beforehand, making it necessary to search a wide range of parameters. This was not the focus of our research, but future research could try to understand and learn which parameter values should be chosen for a given dataset.

\paragraph*{Ranking FI methods}
In \Cref{feature importance: tab: best 20}, the 20 FI methods that passed the most tests were highlighted. However, it is important to stress that not every test is equally difficult. Depending on the user, some properties could be more or less relevant. It is e.g., much harder to accurately predict the specific values for 11 datasets (\Cref{feature importance: test: 18}), than to always predict non-negatively (\Cref{feature importance: test: 4}). Every test is weighed equally, but this does not necessarily represent the difficulty of passing each test accurately. However, we note that {\slshape 175. fechner corr} is the only FI method that passed \Cref{feature importance: test: 18}, that ended up outside the top 20. We stress that we focussed on finding out if FI methods adhere to the properties, not necessarily finding the best and most fair ranking.

\subsection{Additional matters} \label{feature importance: subsec: Additional matters}

\paragraph*{Global vs. local}
\bpfi{} is designed to determine the FI \emph{globally}. However, another important research area focusses on \emph{local} explanations. These explanations should provide information about why a specific sample has a certain target value instead of a different value. They provide the necessary interpretability that is increasingly demanded for practical applications. This could give insights for questions like: `If my income would be higher, could I get a bigger loan?', `Does race play a role in this prediction?', and `For this automated machine learning decision, what were the critical factors?'. Many local FI methods have been proposed, and some even use Shapley values. A structured review should be made about all proposed local methods, similar to our approach for global FI methods to find which local FI methods actually produce accurate explanations.

\bpfi{} can be modified to provide local explanations. For example, we can make the characteristic function localized in the following way.
Let $Y_{S,z}$ be $Y$ restricted to the event that $X_i=z_i$ for $i\notin S$, let us similarly define $X_{S,z}$. Then, we can define a localized characteristic function by:
\begin{equation}
    v_z(S):=\Dep{X_{S,z}}{Y_{S,z}}.
\end{equation}
When dealing with continuous data, assuming equality could be too strict. In this case, a precision vector parameter $\epsilon$ can be used, where we define $Y_{S,z,\epsilon}$ to be $Y$ restricted to the event that $|X_i-z_i|\leq \epsilon_i$ for $i\notin S$, and in the same way we define $X_{S,z,\epsilon}$. We then get the following localized characteristic function:
\begin{align*}
    v_{z,\epsilon}(S):=\Dep{X_{S,z,\epsilon}}{Y_{S,z,\epsilon}}.
\end{align*}
Additionally, there are at least two possible ways how \bpfi{} can be adapted to be used for local explanations if some distance function $d(i,j)$ and parameter $\delta$ are available to determine if sample $j$ is close enough to $i$ to be considered `local'. We can \begin{enumerate*}[label =(\Roman*)]
    \item discard all samples where $d(i,j) > \delta$ and/or
    \item generate samples, such that $d(i,j) \leq \delta$ for all generated samples.
\end{enumerate*}
Then, we can use \bpfi{} on the remaining samples and/or the generated samples, which would give local FI. Note that there should still be enough samples, as we have previously discussed that too few samples could lead to different FI outcomes. However, there are many more ways how \bpfi{} can be modified to be used for local explanations.

\paragraph*{Model-specific FI}
\bpfi{} is in principle model-agnostic, as the FI is determined of the dataset, not the FI for a prediction model. However, \bpfi{} can still provide insights for any specific model. By replacing the target variable with the predicted outcomes of the model, we can apply \bpfi{} to this new dataset, which gives insight into which features are useful in the prediction model. Additionally, one can compare these FI results with the original FI (before replacing the target variable with the predicted outcomes) to see in what way the model changed the FI.

\paragraph*{Additional properties}
In this research, we have proven properties of \bpfi{}. However, an in-depth study could lead to finding more useful properties. This holds both for \bpfi{} as well as the dependency function it is based on. Applying isomorphisms e.g., does not change the dependency function. Therefore, the \bpfi{} is also stable under isomorphisms. Understanding what properties \bpfi{} has is a double-edged sword. Finding useful properties shows the power of \bpfi{} and finding undesirable behavior could lead to a future improvement.

\paragraph*{Additional datasets}
Ground truths are often unknown for FI. In this research, we have given two kinds of datasets where the desirable outcomes are natural. It would however, be useful to create a \emph{larger} collection of datasets both for global and local FI with an exact ground truth. We recognize that this could be a tall order, but we believe that it is essential to further improve FI methods.

\paragraph*{Human labeling}
In some articles \cite{Shap,Ribeiro2016}, humans are used to evaluate explanations. An intriguing question to investigate is if humans are good at predicting FI. The \bpfi{} can be used as baseline to validate the values that are given by the participants. Are humans able to identify the correct order of FI? Even more difficult, can they predict close to the actual FI values?

\section{Summary} \label{feature importance: sec: Summary}


We started by introducing a novel FI method named \emph{Berkelmans-Pries} FI (\bpfi{}), which combines \emph{Shapley} values and the \emph{Berkelmans-Pries} dependency function \cite{BP}. In \Cref{feature importance: sec: Properties of BP-FI}, we proved many useful properties of \bpfi{}. We discussed which FI methods already exist and introduced datasets to evaluate if these methods adhere to the same properties. In \Cref{feature importance: subsec: Property evaluation}, we explain how the properties are tested. The results show that \bpfi{} is able to pass \emph{many more} tests than any other FI method from a large collection of FI methods (468), which is a significant step forwards. Most methods have not previously been tested to give exact results due to missing ground truths. In this research, we provide several specific datasets, where the desired FI can be derived. From the tests, it follows that previous methods are not able to accurately predict the desired FI values. In \Cref{feature importance: sec: Discussion and future research}, we extensively discussed the shortcomings of this paper, and the challenges for further research. There are many challenging research opportunities that should be explored to further improve interpretability and explainability of datasets and machine learning models.

\appendix
    \section{Datasets}  \label{feature importance: appendix: datasets}

    In this appendix, we discuss how the datasets are generated that are used in the experiments. We use \emph{fixed draw} instead of \emph{uniformly random} to draw each dataset \emph{exactly} according to its distribution. This is done to remove stochasticity from the dataset in order to get precise and interpretable results. An example of the difference between fixed draw and uniformly random can be seen in \Cref{feature importance: tab: difference drawn uniform}. The datasets consist of 1{,}000 samples, except for \Cref{feature importance: dataset: 6,feature importance: dataset: 7} which contains 2{,}000 samples to ensure null-independence. The datasets are designed to be computationally inexpensive, whilst still being able to test many properties (see \Cref{feature importance: subsec: Property evaluation}). Below, we outline the formulas that are used to generate the datasets and give the corresponding FI values of our novel method \bpfi{}.



    \datasetappendix{Binary system}{$X_i\sim\UU{\{0,1\}}$ i.i.d. for $i \in \{1,2,3\}$}{$Y := \sum_{i=1}^{3} 2^{i-1} \cdot X_i$}{$(X_1, X_2,X_3)$}{$\left (0.333, 0.333, 0.333 \right )$}


    \datasetappendix{Binary system with clone}{$X_i\sim\UU{\{0,1\}}$ i.i.d. for $i \in \{1,2,3\}$ and $X_1^{\text{clone}}:= X_1$.}{$Y := \sum_{i=1}^{3} 2^{i-1} \cdot X_i$}{$(X_1^{\text{clone}}, X_1, X_2,X_3)$}{$\left (0.202, 0.202, 0.298, 0.298 \right )$}


    \datasetappendix{Binary system with clone and one fully informative variable}{$X_i\sim\UU{\{0,1\}}$ i.i.d. for $i \in \{1,2,3\}$ and $X_1^{\text{clone}}:= X_1$ and $X_4^{\text{full}}:= Y^2$.}{$Y := \sum_{i=1}^{3} 2^{i-1} \cdot X_i$}{$(X_1^{\text{clone}}, X_1, X_2,X_3, X_4^{\text{full}})$}{$\left ( 0.148, 0.148, 0.183, 0.183, 0.338 \right )$}


    \datasetappendix{Binary system with clone and two fully informative variables}{$X_i\sim\UU{\{0,1\}}$ i.i.d. for $i \in \{1,2,3\}$ and $X_1^{\text{clone}}:= X_1$ and $X_4^{\text{full}}:= Y^2$, $X_5^{\text{full}}:= Y^3$.}{$Y := \sum_{i=1}^{3} 2^{i-1} \cdot X_i$}{$(X_1^{\text{clone}}, X_1, X_2,X_3, X_4^{\text{full}}, X_5^{\text{full}})$}{$\left ( 0.117, 0.117, 0.136, 0.136, 0.248, 0.248 \right )$}


    \datasetappendix{Binary system with clone and two fully informative variables different order}{$X_i\sim\UU{\{0,1\}}$ i.i.d. for $i \in \{1,2,3\}$ and $X_1^{\text{clone}}:= X_1$ and $X_4^{\text{full}}:= Y^2$, $X_5^{\text{full}}:= Y^3$.}{$Y := \sum_{i=1}^{3} 2^{i-1} \cdot X_i$}{$(X_3, X_4^{\text{full}}, X_5^{\text{full}}, X_1^{\text{clone}}, X_1, X_2)$}{$\left ( 0.136, 0.248, 0.248, 0.117, 0.117, 0.136 \right )$}


    \datasetappendix{Null-independent system}{$X_i^{\text{null-indep.}} \sim\UU{\{0,1\}}$ i.i.d. for $i\in \{1,2, 3\}$.}{$Y\sim\UU{\{0,1\}}$}{$(X_1^{\text{null-indep.}},X_2^{\text{null-indep.}},X_3^{\text{null-indep.}})$}{$\left ( 0.000, 0.000, 0.000 \right )$}


    \datasetappendix{Null-independent system with constant variable}{$X_i^{\text{null-indep.}} \sim\UU{\{0,1\}}$ i.i.d. for $i\in \{1,2, 3\}$ and $X_4^{\text{const, null-indep.}}:= 1$.}{$Y\sim\UU{\{0,1\}}$}{$(X_1^{\text{null-indep.}},X_2^{\text{null-indep.}},X_3^{\text{null-indep.}}, X_4^{\text{const, null-indep.}})$}{$\left ( 0.000, 0.000, 0.000, 0.000 \right )$}


    \datasetappendix{Uniform system increasing bins}{Let $\mathcal{L}_i:= \{0, 1 / (i-1), \dots, 1 \}$ be an equally spaced set. Define:
    \begin{align*}
        X_1^{\text{bins}=10} & := \argmax_{x_1 \in \mathcal{L}_{10}} \{Y \geq x_1\}, \\[2ex]  X_2^{\text{bins}=50} &:= \argmax_{x_2 \in \mathcal{L}_{50}} \{Y \geq x_2\}, \\[2ex] X_3^{\text{bins}=1{,}000,~\text{full}} &:= \argmax_{x_3 \in \mathcal{L}_{1{,}000}} \{Y \geq x_3\}.
    \end{align*}
    }{$Y \sim \UU{\mathcal{L}_{1{,}000}}$}{$(X_1^{\text{bins}=10}, X_2^{\text{bins}=50}, X_3^{\text{bins}=1{,}000,~\text{full}})$}{$\left ( 0.297, 0.342, 0.361 \right )$}


    \vspace*{0.5 cm}
    \datasetappendix{Uniform system increasing bins more variables}{Let $\mathcal{L}_i:= \{0, 1 / (i-1), \dots, 1 \}$ be an equally spaced set. Define:
    \begin{align*}
        X_1^{\text{bins}=10}                   & := \argmax_{x_1 \in \mathcal{L}_{10}} \{Y \geq x_1\},      \\[2ex]
        X_2^{\text{bins}=20}                   & := \argmax_{x_2 \in \mathcal{L}_{20}} \{Y \geq x_2\},      \\[2ex]
        X_3^{\text{bins}=50}                   & := \argmax_{x_3 \in \mathcal{L}_{50}} \{Y \geq x_3\},      \\[2ex]
        X_4^{\text{bins}=100}                  & := \argmax_{x_4 \in \mathcal{L}_{100}} \{Y \geq x_4\},     \\[2ex]
        X_5^{\text{bins}=1{,}000,~\text{full}} & := \argmax_{x_5 \in \mathcal{L}_{1{,}000}} \{Y \geq x_5\}.
    \end{align*}
    }{$Y \sim \UU{\mathcal{L}_{1{,}000}}$}{$(X_1^{\text{bins}=10}, X_2^{\text{bins}=20}, X_3^{\text{bins}=50}, X_4^{\text{bins}=100}, X_5^{\text{bins}=1{,}000,~\text{full}})$}{$\left ( 0.179, 0.193, 0.204, 0.208, 0.216 \right )$}

    \vfill
    \clearpage


    \datasetappendix{Uniform system increasing bins with clone different order}{Let $\mathcal{L}_i:= \{0, 1 / (i-1), \dots, 1 \}$ be an equally spaced set. Define:
    \begin{align*}
        X_1^{\text{bins}=10}                   & := \argmax_{x_1 \in \mathcal{L}_{10}} \{Y \geq x_1\},      \\[2ex]
        X_2^{\text{bins}=50}                   & := \argmax_{x_2 \in \mathcal{L}_{50}} \{Y \geq x_2\},      \\[2ex]
        X_3^{\text{bins}=1{,}000,~\text{full}} & := \argmax_{x_3 \in \mathcal{L}_{1{,}000}} \{Y \geq x_3\}, \\[2ex]
        X_3^{\text{clone},~\text{full}}        & := X_3^{\text{bins}=1{,}000,~\text{full}}.
    \end{align*}
    }{$Y \sim \UU{\mathcal{L}_{1{,}000}}$}{$(X_3^{\text{bins}=1{,}000,~\text{full}}, X_2^{\text{bins}=50}, X_1^{\text{bins}=10}, X_3^{\text{clone},~\text{full}})$}{$\left ( 0.262, 0.253, 0.223, 0.262 \right )$}


    \datasetappendix{Dependent system: 1x fully informative variable}{$X_1^{\text{full}}, X_2^{\text{null-indep.}}, X_3^{\text{null-indep.}} \sim \UU{\{1,2\}}$.}{$Y:= X_1^{\text{full}}$}{$(X_1^{\text{full}}, X_2^{\text{null-indep.}}, X_3^{\text{null-indep.}})$}{$\left ( 1.000, 0.000, 0.000 \right )$}


    \datasetappendix{Dependent system: 2x fully informative variable}{$X_1^{\text{full}}, X_3^{\text{null-indep.}} \sim \UU{\{1,2\}}$ and $X_2^{\text{full}}:= Y^2$.}{$Y:= X_1^{\text{full}}$}{$(X_1^{\text{full}}, X_2^{\text{full}}, X_3^{\text{null-indep.}})$}{$\left ( 0.500, 0.500, 0.000 \right )$}


    \datasetappendix{Dependent system: 3x fully informative variable}{$X_1^{\text{full}} \sim \UU{\{1,2\}}$ and $X_2^{\text{full}}:= Y^2$, $X_3^{\text{full}}:= Y^3$.}{$Y:= X_1^{\text{full}}$}{$(X_1^{\text{full}}, X_2^{\text{full}}, X_3^{\text{full}})$}{$\left ( 0.333, 0.333, 0.333 \right )$}


    \datasetappendix{XOR dataset}{$X_1, X_2 \sim \UU{\{1,2\}}$.}{$Y:= X_1 \cdot (1-X_2) + X_2 \cdot (1-X_1)$}{$(X_1, X_2)$}{$\left ( 0.500, 0.500 \right )$}


    \datasetappendix{XOR dataset one variable}{$X_1^{\text{null-indep.}}\sim \UU{\{1,2\}}$.}{$Y:= X_1^{\text{null-indep.}} \cdot (1-X_2) + X_2 \cdot (1-X_1^{\text{null-indep.}})$ with $ X_2 \sim \UU{\{1,2\}}$}{$(X_1^{\text{null-indep.}})$}{$\left ( 0.000 \right )$}


    \datasetappendix{XOR dataset with clone}{$X_1, X_2 \sim \UU{\{1,2\}}$ and $X_1^{\text{clone}}:= X_1$.}{$Y:= X_1 \cdot (1-X_2) + X_2 \cdot (1-X_1)$}{$(X_1^{\text{clone}}, X_1, X_2)$}{$\left (0.167, 0.167, 0.667 \right )$}


    \datasetappendix{XOR dataset with null independent}{$X_1, X_2 \sim \UU{\{1,2\}}$ and $X_3^{\text{null-indep.}} \sim \UU{\{0,3\}}$.}{$Y:= X_1 \cdot (1-X_2) + X_2 \cdot (1-X_1)$}{$(X_1, X_2, X_3^{\text{null-indep.}})$}{$\left ( 0.500, 0.500, 0.000 \right )$}


    \datasettappendix{Probability datasets}{$X_i = Z_i + S$ with $Z_i \sim\UU{\{0,2\}}$ i.i.d. for $i = 1,2$ and $\PP(S=1) = p$, $\PP(S=2) = 1- p$.}{$Y = \lfloor X_S / 2 \rfloor$}{$(X_1, X_2)$}{$\left ( p, 1-p \right )$}

    \section{Tests} \label{feature importance: appendix: tests}

    This appendix gives an overview of the tests that are used for each FI method to evaluate if they adhere to the properties given in \Cref{feature importance: sec: Properties of BP-FI}. Most tests are straightforward, but additional explanations are given in \Cref{feature importance: subsec: Property evaluation}.

    \testappendix
    {Efficiency sum \bpfi{}}
    {\Cref{feature importance: prop: efficiency}}
    {We evaluate if the sum of all FI is equal to the sum of the \emph{Berkelmans-Pries} dependency function of $Y$ on all features. When an FI value of NaN or infinite is assigned, the sum is automatically not equal to the sum for \bpfi{}.}

    \testappendix
    {Efficiency stable}
    {\Cref{feature importance: corollary: sum stable}}
    {Whenever a variable is added to a dataset, we examine if the sum of all FI changes. If a variable does not give any additional information compared to the other variables, the sum of all FI should stay the same.}

    \testappendix
    {Symmetry}
    {\Cref{feature importance: prop: symmetry}}
    {In some datasets, there are \emph{symmetrical} variables (see \Cref{feature importance: prop: symmetry}). We determine for all symmetrical variables if they receive identical FI.}

    \testappendix
    {Range (lower)}
    {\Cref{feature importance: prop: range}}
    {We examine for all FI outcomes if they are greater or equal to zero.}

    \testappendix
    {Range (upper)}
    {\Cref{feature importance: prop: range}}
    {We examine for all FI outcomes if they are smaller or equal to one.}

    \testappendix
    {Bounds \bpfi{} (lower)}
    {\Cref{feature importance: prop: bounds}}
    {We evaluate if the bounds given in \Cref{feature importance: prop: bounds} also hold for other FI methods. Every $\FI{X}$ with $X\in \FISET$ can be lower bounded for \bpfi{} by $\frac{\Dep{X}{Y}}{\nvariables} \leq \FI{X}.$}

    \testappendix
    {Bounds \bpfi{} (upper)}
    {\Cref{feature importance: prop: bounds}}
    {We evaluate if the bounds given in \Cref{feature importance: prop: bounds} also hold for other FI methods. Every $\FI{X}$ with $X\in \FISET$ can be upper bounded for \bpfi{} by $
            {X} \leq \Dep{\FISET}{Y}.
        $}

    \testappendix
    {Null-independent implies zero FI}
    {\Cref{feature importance: prop: zero FI}}
    {In some datasets, there are \emph{null-independent} variables. In these cases, we investigate if they also receive zero FI.}

    \testappendix
    {Zero FI implies null-independent}
    {\Cref{feature importance: prop: zero FI}}
    {When a variable gets zero FI, it should hold that such a feature is null-independent.}

    \testappendix
    {One fully informative, two null-independent}
    {\Cref{feature importance: prop: FI equal to one}}
    {
    {feature importance: appendix: datasets}) consists of a fully dependent target variable $Y:= X_1^{\text{full}}$ and two null-independent variables $X_2^{\text{null-indep.}}, X_3^{\text{null-indep.}}$. We test if $\FI{ X_1^{\text{full}}} =1 $ and $\FI{X_2^{\text{null-indep.}}} = \FI{X_3^{\text{null-indep.}}} = 0.$
    }

    \testappendix
    {Fully informative variable in argmax FI}
    {\Cref{feature importance: prop: max Fi when fully determined}}
    {Whenever a fully informative feature exists in a dataset, there should not be a feature that attains a higher FI.}

    \testappendix
    {Limiting the outcome space}
    {\Cref{feature importance: prop: limiting outcome space}}
    {
        To evaluate if applying a measurable function $f$ to a RV $X$ could increase the FI, we examine the datasets where the same RV is binned using different bins. The binning can be viewed as applying a function $f$. Whenever less bins are used, the FI should not increase.}

    \testappendix
    {Adding features can increase FI}
    {\Cref{feature importance: prop: adding features can increase FI}}
    {Whenever a feature is added to a dataset, we examine if this ever increases the FI of an original variable. If the FI never increases, we consider this a fail.}

    \testappendix
    {Adding features can decrease FI}
    {\Cref{feature importance: prop: adding features can decrease FI}}
    {Whenever a feature is added to a dataset, we examine if this ever decreases the FI of an original variable. If the FI never decreases, we consider this a fail.}

    \testappendix
    {Cloning does not increase FI}
    {\Cref{feature importance: prop: cloning does not increase FI}}
    {We evaluate if adding a clone to a dataset increase the FI of the original variable.}

    \testappendix
    {Order does not change FI}
    {\Cref{feature importance: prop: order does not change FI}}
    {We check if the order of the variables changes the assigned FI.}

    \testappendix{Outcome XOR}{\Cref{feature importance: prop: xor dataset}}{This test evaluates the specific outcome of two datasets. For \Cref{feature importance: dataset: 14} the desired outcome is $(1/2, 1/2)$ and $(1/2, 1/2, 0)$ for \Cref{feature importance: dataset: 17}. An FI method fails this test when one of the FI values falls outside the $\epsilon$-bound of the desired outcome.}

    \testappendix
    {Outcome probability datasets}
    {\Cref{feature importance: prop: probability dataset}}
    {
        This test evaluates the specific outcomes of all probability datasets (\Cref{feature importance: dataset: 18,feature importance: dataset: 19,feature importance: dataset: 20,feature importance: dataset: 21,feature importance: dataset: 22,feature importance: dataset: 23,feature importance: dataset: 24,feature importance: dataset: 25,feature importance: dataset: 26,feature importance: dataset: 27,feature importance: dataset: 28}). The desired outcome for probability $p$ is $(p, 1-p)$. An FI method fails this test when one of the FI values falls outside the $\epsilon$-bound of the desired outcome.
    }

    \begingroup
    \setlength{\emergencystretch}{1.67em} 
    \printbibliography
    
    \endgroup

\end{document}